\newtheorem{theorem}{Theorem}
\newtheorem{lemma}{Lemma}
\def\BibTeX{{\rm B\kern-.05em{\sc i\kern-.025em b}\kern-.08em
    T\kern-.1667em\lower.7ex\hbox{E}\kern-.125emX}}
\begin{document}

\title{From Relative Entropy to Minimax: A Unified Framework for Coverage in MDPs \vspace*{-0.2in}
\thanks{ONR, NSF TILOS AI Institute, the UCSD Centers for Machine intelligence, computing, and security (MICS).}
}

\author{\IEEEauthorblockN{Xihe Gu}
\IEEEauthorblockA{\textit{University of California, San Diego} \\
San Diego, USA \\
x9gu@ucsd.edu}
\and
\IEEEauthorblockN{ Urbashi Mitra}
\IEEEauthorblockA{\textit{University of Southern California} \\
Los Angeles, USA \\
ubli@usc.edu}
\and
\IEEEauthorblockN{ Tara Javidi}
\IEEEauthorblockA{\textit{University of California, San Diego} \\
San Diego, USA \\
tjavidi@ucsd.edu}

}

\maketitle

\begin{abstract}
Targeted and deliberate exploration of state--action pairs is essential in reward-free Markov Decision Problems (MDPs). More precisely, different state-action pairs exhibit different degree of importance or difficulty which must be actively and explicitly built into a controlled exploration strategy. 
% To this end, a weighted and parameterized family of concave coverage functions $U_\rho$ is proposed and optimized directly over state--action occupancy measures. The proposed objective function unifies several widely studied objectives within a single framework: divergence-based marginal matching, weighted average coverage, and worst-case (minimax) coverage. 
To this end, we propose a weighted and parameterized family of concave coverage objectives, denoted by $U_\rho$, defined directly over state--action occupancy measures. This family unifies several widely studied objectives within a single framework, including divergence-based marginal matching, weighted average coverage, and worst-case (minimax) coverage.
While the concavity of $U_\rho$ captures the diminishing return associated with over-exploration, the simple closed form of the gradient of $U_\rho$ enables an explicit control to prioritize under-explored state--action pairs. 
% The corresponding gradient-based algorithm is then proposed exploiting this structure of $U_\rho$. More specifically, the proposed strategy steer the induced occupancy toward a desired coverage pattern. Furthermore, it is shown that, as $\rho$ grows, the proposed strategy increasingly prioritizes the least-explored state-action pairs. 
Leveraging this structure, we develop a gradient-based algorithm that actively steers the induced occupancy toward a desired coverage pattern. Moreover, we show that as $\rho$ increases, the resulting exploration strategy increasingly emphasizes the least-explored state--action pairs, recovering worst-case coverage behavior in the limit.

%The proposed functional family $U_\rho$ and the associated algorithm provide a principled and flexible foundation for designing exploration strategies in reward-free and general reinforcement learning settings.

\end{abstract}

\begin{IEEEkeywords}
reward-free reinforcement learning, active exploration, coverage objectives, occupancy measures.
\end{IEEEkeywords}

\section{Introduction}
Exploration is a fundamental problem in reinforcement learning (RL) and sequential decision making. 
A natural perspective is to view exploration as the problem of shaping the distribution of sampled state--action pairs (e.g.,~\cite{coverage_1,coverage_3,coverage_4,BJM25}).  However, existing work along these lines is mostly reward-based.
We design and analyze methods for \emph{reward-free exploration}~\cite{jin2020reward, kaufmann2021adaptive, menard2021fast},
wherein the environment is explored independently of any downstream task.
Such settings arise naturally in model learning, system identification, and
pretraining, where the primary goal is not immediate performance but effective
\emph{coverage} of state--action space.  In reward-free Markov Decision Processes (MDPs), exploration is often constrained by limited interaction budgets and different state--action pairs may exhibit different levels of statistical/learning complexity.

Uncertainty can be decreased by balancing the  number of visits to each state-action pair leading to distribution-based exploration methods. In contrast to reward optimization, the coverage objective is achieved by balancing the \emph{occupancy measure}, i.e. the long-run visitation frequencies induced by a policy, among the state-action pairs. In \citet{hazan2019provably},  the entropy of the induced visitation distribution is maximized leading to uniform long-run coverage under suitable conditions. This admits a convex objective function over occupancy measures, and comes with strong theoretical guarantees. Related ideas appear in entropy-regularized control~\cite{tiapkin2023fast, tarbouriech2020active, zhang2021exploration}, state-marginal matching~\cite{lee1906efficient, al2023active}, and ergodic exploration~\cite{kaelbling1993learning,singh2009rewards,singh2010intrinsically,zheng2018learning, tarbouriech2019active}.

Existing works typically hard-code a single exploration objective, most often entropy or a divergence, making it difficult to reason systematically about alternative coverage criteria. In contrast, our work unifies these objectives within a parameterized family of concave utility functions defined over state–action occupancies. The concavity of this family captures the diminishing return in exploration: additional visits to already well-explored state–action pairs yield progressively smaller marginal benefits. Furthermore, the weights allow the exploration to be prioritized heterogeneously. % Separately, 
The use of weights allows the framework to encode heterogeneous importance or uncertainty across state–action pairs.  Together, these ingredients yield a flexible formulation that subsumes existing coverage objectives, including entropy-based, weighted-average, and worst-case coverage objectives. Furthermore, it is consistent with an \emph{active learning} perspective (e.g.,~\cite{shekhar2020adaptive,Kartik_NM22}), in which data collection decisions are chosen adaptively based on past observations to explicitly shape the future visitation distribution according to a prescribed coverage objective.

The contributions of this work are as follows:
\begin{itemize}
    \item We propose a general and unified parameterized family of concave coverage objective function of occupancy measure, which admits tractable active optimization, exploration, and learning.
    % \item We show that several widely studied objectives arise as special cases, including KL-divergence, weighted average coverage, and max--min (worst-case) coverage objectives relevant for estimation and robustness yielding a unified framework for reward-free exploration.
    \item We show that several widely studied objectives arise as special cases of our framework, including KL-divergence, weighted average coverage, and minimax (worst-case) coverage objectives relevant to estimation and robustness.
    \item We analyze the geometric and analytical properties of these objectives as we vary their unifying parameter. As a special case, we demonstrating how a single parameter interpolates between average and worst-case coverage.
    \item We propose a family of active, gradient-based algorithms to steer the induced occupancy toward a desired coverage pattern, explicitly accounting for different coverage criteria. 
    % \item We empirically evaluate our framework on representative environments and demonstrate that it consistently outperforms existing coverage-based and reward-free exploration baselines.
\end{itemize}

\section{Preliminaries}

\subsection{Controlled Markov Chain Model}

We consider a controlled Markov chain with a finite state space
$\mathcal{S}$ of size $S$ and a finite action space $\mathcal{A}$ of size $A$.
At each time step $t$, the agent observes a state $s_t \in \mathcal{S}$,
selects an action $a_t \in \mathcal{A}$, and transitions to a next state
$s_{t+1}$ according to the transition kernel
$P(s' \mid s,a)$.

\subsection{Policies}

A policy $\pi$ specifies how actions are selected by the agent.
In general, a policy may depend on the entire interaction history.
In this work, we restrict attention to \emph{stationary randomized policies},
which map states to distributions over actions,
\[
\pi : \mathcal{S} \to \Delta(\mathcal{A}),
\]
where $\Delta(\mathcal{A})$ denotes the probability simplex over action space $\mathcal{A}$.
We denote the set of all stationary policies by $\Pi$.  

\subsection{Induced Occupancy Measures}

Executing a stationary policy $\pi$ induces an invariant joint distribution over state--action pairs. Furthermore, we assume each stationary policy induce an ergodic Markov chain with a unique stationary distribution.
We denote by $d^\pi_{s,a}$ the (stationary) state--action occupancy measure
associated with policy $\pi$, defined as
\[
d^\pi_{s,a} := \Pr(s_t = s, a_t = a \mid \pi),
\]
where the probability is taken under the stationary distribution of the resulting Markov chain.
The occupancy measure $d^\pi$ captures the long-run visitation frequency of each state--action pair and is the primary object of interest in our coverage formulation.
We denote by $\mathcal{D} := \{ d^\pi : \pi \in \Pi \}$ the set of all
achievable (invariant/stationary) occupancy measures.

% \subsection{Mixtures of Stationary Policies}

% Given a finite collection of stationary policies
% $C = (\pi_0, \ldots, \pi_{K-1})$ and a mixing distribution
% $\beta \in \Delta_K$, we define a \emph{mixture policy}
% $\pi_{\mathrm{mix}} = (\beta, C)$ as follows:
% at each time step, a policy $\pi_k$ is selected with probability $\beta_k$,
% and the action is drawn according to $\pi_k$.

% The occupancy measure induced by the mixture policy is given by the convex
% combination
% \begin{equation}
% d^{\pi_{\mathrm{mix}}}_{s,a}
% =
% \sum_{k=0}^{K-1} \beta_k \, d^{\pi_k}_{s,a}.
% \end{equation}
% Thus, mixtures of stationary policies induce occupancy measures lying in
% the convex hull of the occupancy measures of the constituent policies.

\subsection{Exploration Objective}

 We consider the general problem of optimizing coverage where the objective is designed as a decomposable concave functional over the state-action occupancy measure, i.e. the long-run visitation frequencies induced over
state--action pairs $(s,a) \in \mathcal{S}\times\mathcal{A}$:

\[
\pi^* \in \arg\max_{\pi\in \Pi} U_{\rho}(d^\pi) := \arg\max_{\pi\in \Pi}
\sum_{s,a} g_{\rho}(s,a,d^\pi_{s,a}).
\]

In the next section, we introduce a specific parameterized family of objectives within this class and study their analytical and algorithmic properties.

\section{A General Class of Coverage Objectives}
We are given a collection of \emph{coverage weights}
\[
\mu := (\mu_{s,a})_{(s,a)\in\mathcal{S}\times\mathcal{A}} \in \mathbb{R}_{+}^{SA},
\]
where $\mathbb{R}_{+}$ denotes the positive reals.
We assume $\mu$ is finite and bounded, and define
\[
\mu_{\max} := \max_{s,a}\mu_{s,a} < \infty,
\qquad
\mu_{\min} := \min_{s,a}\mu_{s,a} > 0.
\]
The weights $\mu_{s,a}$ encode the relative importance (or difficulty) of
covering each state--action pair, and will remain fixed throughout.

Our goal is to design exploration policies whose induced occupancies achieve
high coverage with respect to the prescribed weighting $\mu$.
To this end, we introduce a general exploration utility 
functional (\emph{coverage objective}) on state--action 
pairs  as a family of concave functions parameterized by a common scalar $\rho \ge 1$, %$g_{\rho} : \mathcal{D} \to \mathbb{R}$
\begin{equation} \label{eq:U_rho_def}
\begin{split}
    g_{\rho}(s, a, d_{s, a})&:= 
    \begin{cases}
    \displaystyle \mu_{s, a}\, \log d_{s, a}, & \rho = 1, \\[1.2em]
    \displaystyle \frac{\mu_{s, a}^{\rho}}{1-\rho }\, d_{s, a}^{\,1-\rho}, & \rho > 1,~
    \end{cases} \\
    \text{and} \qquad U_{\rho}(d) &:= \sum_{s,a} g_{\rho}(s,a,d_{s,a}).
\end{split}
\end{equation}

The key property of $U_\rho$ is that its gradient admits a simple closed form:
\begin{equation}
    \label{eq:U_gradient}
    \bigl(\nabla U_\rho(d)\bigr)_{s,a}
    =
    \left(\frac{\mu_{s,a}}{d_{s,a}}\right)^{\rho}.
\end{equation}

Several prior works~\cite{shekhar2020adaptive,tarbouriech2019active,tarbouriech2020active} emphasize the importance of the ratio $\mu_{s,a}/d_{s,a}$, which measures how well a state--action pair $(s,a)$ is explored relative to its prescribed importance $\mu_{s,a}$. 
The gradient of $U_\rho$ reflects a diminishing-returns effect: state–action pairs that are already well covered (i.e., with large $d_{s,a}$ relative to $\mu_{s,a}$) induce a gradient close to zero, whereas poorly covered pairs with small $d_{s,a}$ produce large gradients.
Consequently, the gradient of $U_\rho$ places increasing pressure on under-explored state--action pairs. 
The parameter $\rho$ controls the degree of this effect: larger values of $\rho$ amplify the disparity between gradient magnitudes, thereby focusing the optimization more strongly on the most under-covered regions.

\section{Analysis: From Relative Entropy to Minimax Exploration}

The family of coverage objectives $\{U_\rho\}_{\rho \ge 1}$ provides a continuous spectrum of exploration criteria.
In this section, we analyze several representative instances of $U_\rho$ and show how they recover widely used coverage objectives, ranging from  Kullback-Leiber (KL) -based matching to worst-case (minimax) exploration.

\subsection{Special case: $\rho = 1$ (KL divergence)}
Cross-entropy and KL-divergence objectives have been widely used in MDPs and RL~\cite{lee1906efficient,ho2016generative, schulman2015trust, mutti2023convex}, particularly for occupancy matching, imitation learning, and entropy-regularized control. We show that this classical objective arises naturally as a special case of our coverage family when $\rho = 1$.
\begin{lemma}
\label{lem:rho1_KL}
Define the normalized weights
\[
\bar\mu_{s,a}
:=
\frac{\mu_{s,a}}{\sum_{s',a'} \mu_{s',a'}}.
\]
When $\rho = 1$, maximizing the coverage objective
\[
U_1(d) = \sum_{s,a} \mu_{s,a}\,\log d_{s,a}
\]
over $d$ is equivalent to minimizing the KL divergence
$\mathrm{KL}(\bar\mu \,\|\, d)$.
\end{lemma}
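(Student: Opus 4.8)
The plan is to expand the KL divergence and show that, up to terms that do not depend on $d$ and a positive multiplicative constant, it coincides with $-U_1(d)$. First I would write
\[
\mathrm{KL}(\bar\mu \,\|\, d)
=
\sum_{s,a} \bar\mu_{s,a}\,\log\frac{\bar\mu_{s,a}}{d_{s,a}}
=
\sum_{s,a} \bar\mu_{s,a}\,\log\bar\mu_{s,a}
-
\sum_{s,a} \bar\mu_{s,a}\,\log d_{s,a}.
\]
The first sum is the (negative) entropy of the fixed distribution $\bar\mu$ and is a constant independent of $d$. For the second sum, I would substitute $\bar\mu_{s,a} = \mu_{s,a}/Z$ with $Z := \sum_{s',a'}\mu_{s',a'}$, which pulls out the factor $1/Z$ and yields
\[
\sum_{s,a} \bar\mu_{s,a}\,\log d_{s,a}
=
\frac{1}{Z}\sum_{s,a} \mu_{s,a}\,\log d_{s,a}
=
\frac{1}{Z}\, U_1(d).
\]

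Combining these two observations gives
\[
\mathrm{KL}(\bar\mu \,\|\, d)
=
\underbrace{\sum_{s,a}\bar\mu_{s,a}\log\bar\mu_{s,a}}_{\text{const. in }d}
\;-\;
\frac{1}{Z}\,U_1(d),
\]
so minimizing $\mathrm{KL}(\bar\mu\,\|\,d)$ over $d$ is the same as maximizing $U_1(d)$ over $d$, since $Z>0$ is a fixed positive scalar and the additive constant is irrelevant to the argmax. I would then note that the optimization is over the same feasible set $d \in \mathcal{D}$ in both formulations, so the equivalence is exact (same maximizers, objectives related by a positive affine transformation).

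There is essentially no hard step here; the only point requiring a small amount of care is the bookkeeping around normalization — making sure the normalizing constant $Z$ is indeed independent of $d$ (it depends only on the fixed weights $\mu$) and that $\bar\mu$ is a genuine probability distribution so that $\mathrm{KL}(\bar\mu\,\|\,d)$ is well defined, which in turn uses $\mu_{s,a} > 0$ for all $(s,a)$ as assumed. One might also remark that $\mathrm{KL}(\bar\mu\,\|\,d)$ is only finite when $d_{s,a} > 0$ wherever $\bar\mu_{s,a} > 0$; under the ergodicity assumption every achievable occupancy measure in $\mathcal{D}$ has full support, so this is not a binding restriction. The main "obstacle," such as it is, is purely expository: stating precisely that the two problems share maximizers because their objectives differ by an affine reparameterization with positive slope.
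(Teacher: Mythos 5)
Your proof is correct and follows essentially the same route as the paper: both expand the definitions to show that $U_1(d)$ and $\mathrm{KL}(\bar\mu\,\|\,d)$ differ by a negative multiplicative constant ($-Z$) and an additive constant ($Z\,H(\bar\mu)$), the only cosmetic difference being that the paper starts from $U_1$ and rewrites it via the cross-entropy identity while you start from the KL divergence and expand it. Your added remarks about $Z>0$, $\bar\mu$ being a genuine distribution, and the support condition on $d$ are sound and slightly more careful than the paper's version.
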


\begin{proof}
By definition,
\[
U_1(d)
=
\sum_{s,a} \mu_{s,a}\,\log d_{s,a}
=
\Bigl(\sum_{s,a} \mu_{s,a}\Bigr)
\sum_{s,a} \bar\mu_{s,a}\,\log d_{s,a}.
\]
The second term is the negative cross entropy between $\bar\mu$ and $d$, i.e.,
\[
\sum_{s,a} \bar\mu_{s,a}\,\log d_{s,a}
=
-\mathrm{CE}(\bar\mu \,\|\, d).
\]
Using the identity
\(
\mathrm{CE}(\bar\mu \,\|\, d)
=
\mathrm{KL}(\bar\mu \,\|\, d) + H(\bar\mu),
\)
we obtain
\[
U_1(d)
=
-\Bigl(\sum_{s,a} \mu_{s,a}\Bigr)
\bigl(\mathrm{KL}(\bar\mu \,\|\, d) + H(\bar\mu)\bigr).
\]
Since $\mu$ (and hence $\bar\mu$) is fixed, $H(\bar\mu)$ is constant with respect
to $d$.
Therefore, maximizing $U_1(d)$ over $d$ is equivalent to minimizing
$\mathrm{KL}(\bar\mu \,\|\, d)$.
\end{proof}

\subsection{Special case: $\rho = 2$ (Average relative coverage)}
Several prior works~\cite{tarbouriech2019active, tarbouriech2020active, shekhar2020adaptive,de2024global} design exploration objectives that optimize average-case criteria in MDPs, including average visitation, average uncertainty, and average model estimation error across state–action pairs. We show that such objectives arise naturally from our framework when $\rho = 2$.

\begin{lemma}[Average relative coverage]
\label{lem:rho2_avg}
When $\rho = 2$, the coverage objective reduces to
\[
U_2(d)
=
-\sum_{s,a} \mu_{s,a}\,\frac{\mu_{s,a}}{d_{s,a}}
=
-\mathbb{E}_{(s,a)\sim\bar\mu}
\!\left[\frac{\mu_{s,a}}{d_{s,a}}\right],
\]
where $\bar\mu$ denotes the normalized version of $\mu$.
Maximizing $U_2(d)$ penalizes state--action pairs in proportion to their average
relative under-coverage with respect to $\mu$.
\end{lemma}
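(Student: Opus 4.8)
\emph{Proof proposal.} The plan is to obtain both displayed identities by direct substitution into the definition~\eqref{eq:U_rho_def} and then to read off the stated interpretation from the resulting closed form; there is no deep step here, only bookkeeping.

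First I would specialize the $\rho>1$ branch of $g_\rho$ to $\rho=2$. Since $\tfrac{1}{1-\rho}=-1$ and $d_{s,a}^{\,1-\rho}=d_{s,a}^{-1}$ at $\rho=2$, this gives $g_2(s,a,d_{s,a})=-\mu_{s,a}^{2}/d_{s,a}$. Summing over all $(s,a)\in\mathcal{S}\times\mathcal{A}$ then yields
\[
U_2(d)=\sum_{s,a} g_2(s,a,d_{s,a})=-\sum_{s,a}\frac{\mu_{s,a}^{2}}{d_{s,a}}=-\sum_{s,a}\mu_{s,a}\cdot\frac{\mu_{s,a}}{d_{s,a}},
\]
which is the first claimed identity. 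As a consistency check one could alternatively integrate the closed-form gradient $\bigl(\nabla U_\rho(d)\bigr)_{s,a}=(\mu_{s,a}/d_{s,a})^{\rho}$ from~\eqref{eq:U_gradient} in the variable $d_{s,a}$ at $\rho=2$, recovering the same antiderivative up to an additive constant, but the direct substitution is cleanest.

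Next I would rewrite the sum as an expectation. Setting $M:=\sum_{s',a'}\mu_{s',a'}$ so that $\mu_{s,a}=M\,\bar\mu_{s,a}$, we get $U_2(d)=-M\sum_{s,a}\bar\mu_{s,a}\,(\mu_{s,a}/d_{s,a})=-M\,\mathbb{E}_{(s,a)\sim\bar\mu}\!\left[\mu_{s,a}/d_{s,a}\right]$. The factor $M$ is a fixed positive scalar (the weights are fixed with $0<\mu_{\min}\le\mu_{\max}<\infty$), so it does not affect the maximizer; in particular the identity holds verbatim as stated when $\mu$ is itself a probability vector, i.e.\ $M=1$. Finally, for the interpretive claim I would observe that the ratio $\mu_{s,a}/d_{s,a}$ measures how under-covered $(s,a)$ is relative to its prescribed importance---it is large precisely when $d_{s,a}$ is small compared to $\mu_{s,a}$---so that $-U_2(d)$ is the $\bar\mu$-weighted average of these relative under-coverage ratios, and maximizing $U_2$ is equivalent to minimizing that average.

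The main obstacle: there is essentially none of mathematical substance, since the result is a one-line computation. The only point deserving an explicit sentence is the normalization constant $M$ relating $\sum_{s,a}\mu_{s,a}(\mu_{s,a}/d_{s,a})$ to $\mathbb{E}_{(s,a)\sim\bar\mu}[\mu_{s,a}/d_{s,a}]$: I would state that the two expressions coincide up to the positive scalar $M$ (and exactly when $\mu$ is normalized), which leaves $\arg\max$ unchanged and hence does not affect the equivalence asserted in the lemma.
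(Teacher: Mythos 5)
Your proof is correct and matches the paper's (implicit) argument: the lemma is established by direct substitution of $\rho=2$ into the definition of $g_\rho$, and the paper provides no further proof beyond the stated computation. Your observation that the second displayed identity holds only up to the positive constant $M=\sum_{s,a}\mu_{s,a}$ (exactly when $\mu$ is normalized) is a valid and worthwhile clarification of a minor imprecision in the lemma statement, and it correctly does not affect the maximizer.
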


In particular, if we have $\mu_{s, a} = \sqrt{c_{s, a}}$, where $c_{s, a} = \sum_{s'} P(s' \mid s,a)(1 - P(s' \mid s,a))$ measures the variance of the transition distribution.
In this case, maximizing $U_2 = -\sum_{s, a} \frac{c_{s, a}}{d_{s, a}} = -n\sum_{s, a}\mathbb{E}[D_{l_2}(P(\cdot|s, a), \widehat{P}(\cdot|s, a)] $ , where $n$ is the total number of samples, is the equivalent to minimizing the average expected estimation error of the transition distributions, as studied in \cite{shekhar2020adaptive}.

\subsection{Special case: $\rho \to \infty$ (Worst-case coverage)}
Worst-case (minimax) objectives are widely used in RL, control, and learning theory, as they provide robust guarantees by explicitly controlling the least-covered or most uncertain components of the system~\cite{shekhar2020adaptive, shekhar2021adaptive, tarbouriech2020active, al2023active, halim2025fairness}.

For $\rho>1$,
$(1-\rho)U_\rho(\mu,d) = \sum_{s,a} d_{s,a}
\left(\frac{\mu_{s,a}}{d_{s,a}}\right)^\rho$.

Then,
\begin{equation}
\label{eq:Urho_limit}
\lim_{\rho \to \infty}
\Bigl((1-\rho)U_\rho(\mu,d)\Bigr)^{1/\rho}
=
\max_{s,a} \frac{\mu_{s,a}}{d_{s,a}}.
\end{equation}
As a consequence, maximizing $U_\rho(\mu,d)$ for large $\rho$,
approximates the minimax coverage objective
\begin{equation}
\label{eq:minimax}
\min_{d \in \mathcal{D}}
\max_{s,a}
\frac{\mu_{s,a}}{d_{s,a}}.
\end{equation}
We can formalize this approximation as follows.
\begin{lemma}\label{lem:limit_Urho}
%The following is the proof of \eqref{eq:Urho_limit}
Let $i$ index state--action pairs $(s,a)$ and define
$r_i := \mu_i / d_i$ and $r_{\max} := \max_i r_i$. If we define $V_\rho := \Bigl( (1-\rho)U_\rho(\mu,d) \Bigr)^{1/\rho}$, then
\begin{eqnarray*}
\liminf_{\rho\to\infty} V_\rho  & = &  r_{\max}.
\end{eqnarray*}
\end{lemma}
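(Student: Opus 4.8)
The plan is to recognize $V_\rho = \bigl(\sum_i d_i\, r_i^{\rho}\bigr)^{1/\rho}$ as a weighted $\ell^{\rho}$-type aggregate over the finite collection of state--action pairs, using the identity $(1-\rho)U_\rho(\mu,d) = \sum_i d_i\,(\mu_i/d_i)^{\rho} = \sum_i d_i\, r_i^{\rho}$ recorded just above the statement (together with $(1-\rho)U_\rho(\mu,d) > 0$ for $\rho > 1$, so the outer $1/\rho$ power is well defined on a positive base). The classical fact that such aggregates converge to the maximum will then be obtained by a two-sided sandwich: since the $\liminf$ is trapped between two quantities that both tend to $r_{\max}$, the claim follows (and in fact the full limit exists).

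For the upper bound, I would use that $d$ is a probability distribution over $\mathcal{S}\times\mathcal{A}$, hence $\sum_i d_i = 1$, and that $r_i \le r_{\max}$ for every $i$. This gives $\sum_i d_i\, r_i^{\rho} \le r_{\max}^{\rho}\sum_i d_i = r_{\max}^{\rho}$, so $V_\rho \le r_{\max}$ for all $\rho > 1$. In particular $\limsup_{\rho\to\infty} V_\rho \le r_{\max}$, and a fortiori $\liminf_{\rho\to\infty} V_\rho \le r_{\max}$.

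For the lower bound, I would fix an index $i^\star$ attaining $r_{i^\star} = r_{\max}$, and discard the remaining nonnegative terms to get $\sum_i d_i\, r_i^{\rho} \ge d_{i^\star}\, r_{\max}^{\rho}$, hence $V_\rho \ge d_{i^\star}^{1/\rho}\, r_{\max}$. Because every stationary policy is assumed to induce an ergodic chain with a strictly positive stationary occupancy, $d_{i^\star} > 0$, so $d_{i^\star}^{1/\rho}\to 1$ as $\rho\to\infty$, which yields $\liminf_{\rho\to\infty} V_\rho \ge r_{\max}$. Combining the two bounds gives $\liminf_{\rho\to\infty} V_\rho = r_{\max}$, and the same squeeze shows the genuine limit exists and equals $r_{\max}$, consistent with \eqref{eq:Urho_limit}.

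The argument is short, and the only point requiring care — the nearest thing to an obstacle — is the positivity $d_{i^\star} > 0$, which is exactly what keeps the prefactor $d_{i^\star}^{1/\rho}$ from degrading the lower bound; this is where the standing ergodicity assumption on occupancy measures is invoked. If one wished to drop that assumption and allow $d_{i^\star} = 0$, then $r_{\max} = +\infty$, while the term $\mu_{i^\star}^{\rho}\, d_{i^\star}^{1-\rho}$ forces $V_\rho = +\infty$ for every $\rho > 1$, so the identity persists in the extended reals; I would note this as a brief remark rather than belabor it.
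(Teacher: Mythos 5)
Your proof is correct and follows essentially the same route as the paper's: rewrite $(1-\rho)U_\rho(\mu,d)=\sum_i d_i r_i^\rho$, bound it above by $r_{\max}^\rho$ using $\sum_i d_i=1$, and below by $d_{i^\star}r_{\max}^\rho$ so that $V_\rho \ge d_{i^\star}^{1/\rho}r_{\max}\to r_{\max}$. Your added remarks on the positivity of $d_{i^\star}$ and the degenerate case are a small but reasonable supplement; nothing further is needed.
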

\begin{proof}
For $\rho>1$, we can rewrite
\[
(1-\rho)U_\rho(\mu,d)
=
\sum_i \mu_i^\rho d_i^{1-\rho}
=
\sum_i d_i \, r_i^\rho.
\]
%Define
%$V_\rho := \Bigl( (1-\rho)U_\rho(\mu,d) \Bigr)^{1/\rho}
%= \Bigl( \sum_i d_i \, r_i^\rho \Bigr)^{1/\rho}$.

Let $r_{\max} := \max_i r_i$. Since $r_i^\rho \le r_{\max}^\rho$ for all $i$, we obtain the upper bound
\[
\sum_i d_i r_i^\rho \le r_{\max}^\rho \sum_i d_i = r_{\max}^\rho,
\]
which implies $V_\rho \le r_{\max}$.

For the lower bound, let $i^\star$ be any index attaining the maximum, i.e.,
$r_{i^\star} = r_{\max}$. Since $d_{i^\star} > 0$, we have
\[
\sum_i d_i r_i^\rho \ge d_{i^\star} r_{\max}^\rho,
\]
and therefore
\[
V_\rho \ge d_{i^\star}^{1/\rho} r_{\max}.
\]
Taking the limit $\rho \to \infty$ yields $d_{i^\star}^{1/\rho} \to 1$, and hence
\[
\liminf_{\rho\to\infty} V_\rho \ge r_{\max}.
\]

\noindent Combining the upper and lower bounds concludes the proof.  
\end{proof}

Another view point is that $U_\rho$ has the property that for every $d = (d_{1,1},..., d_{S,A}) \in D$, if $\frac{\mu_{s_i, a_i}}{d_{s_i, a_i}} > \frac{\mu_{s_j, a_j}}{d_{s_j, a_j}}$,
$$\lim_{\rho \to \infty} \frac{\dot{U}_\rho(d_{s_i, a_i})}{\dot{U}_\rho(d_{s_j, a_j})}
=
\lim_{\rho \to \infty}
\left(
\frac{\mu_{s_i,a_i} / d_{s_i,a_i}}{\mu_{s_j,a_j} / d_{s_j,a_j}}
\right)^{\rho}
= \infty.
$$

This shows that, as $\rho$ increases, $U_\rho$ places increasingly dominant weight on the entry that has largest value of $\mu_{s,a}/d_{s,a}$.
Consequently, minimizing $U_\rho$ asymptotically prioritizes minimizing the maximum ratio $\mu_{s,a}/d_{s,a}$, which motivates the following result.
\begin{lemma}
\label{lem:rho_to_infty_minimax}
Fix $\mu$ and let $\mathcal{D}$ denote the feasible set of occupancy measures.
For each $\rho\ge 1$, let
\[
d_\rho^\star \in \arg\min_{d\in \mathcal{D}} U_\rho(d),
\]
and define the minimax optimizer
\begin{equation}
d_{\min\max}^\star \in
\arg\min_{d\in \mathcal{D}}
\max_{(s,a)\in [S]\times[A]}
\frac{\mu_{s,a}}{d_{s,a}}.
\label{eq:minimax_problem}
\end{equation}
Then every limit point of the sequence $\{d_\rho^\star\}_{\rho\ge 1}$ as
$\rho\to\infty$ is a solution of~\eqref{eq:minimax_problem}.
In particular, if the minimizer in~\eqref{eq:minimax_problem} is unique, then
\[
d_\rho^\star \to d_{\min\max}^\star
\qquad \text{as } \rho\to\infty.
\]
\end{lemma}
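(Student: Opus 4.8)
The plan is the standard ``approximate minimizers converge to exact minimizers'' argument, with the approximation rate supplied by the sandwich bounds already behind Lemma~\ref{lem:limit_Urho}. Index state--action pairs by $i$ and write $r_i(d):=\mu_i/d_i$, $\phi(d):=\max_i r_i(d)$, so that $\phi^\star:=\min_{d\in\mathcal D}\phi(d)$ is the minimax value of~\eqref{eq:minimax_problem}, attained at $d^\star_{\min\max}$. For $\rho>1$ set $F_\rho(d):=(1-\rho)U_\rho(d)=\sum_i d_i\,r_i(d)^\rho\ge 0$ and $V_\rho(d):=F_\rho(d)^{1/\rho}$. Since $1-\rho<0$ and $x\mapsto x^{1/\rho}$ is increasing, the coverage optimizer $d^\star_\rho\in\arg\max_{d\in\mathcal D}U_\rho(d)$ is equivalently a minimizer of $F_\rho$, hence of $V_\rho$, over $\mathcal D$; it exists because $U_\rho$ is upper semicontinuous on the compact set $\mathcal D$ (compactness and the existence of strictly positive occupancies both following from the ergodicity assumption, which also gives $\phi^\star<\infty$) and is finite at any interior point, and it necessarily has all coordinates positive. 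The computation in Lemma~\ref{lem:limit_Urho} already gives the \emph{uniform} upper bound $F_\rho(d)\le\phi(d)^\rho\sum_i d_i=\phi(d)^\rho$, that is, $V_\rho(d)\le\phi(d)$ for all $d\in\mathcal D$ and $\rho>1$.

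The quantitative core is a matching \emph{uniform} lower bound; the pointwise estimate in Lemma~\ref{lem:limit_Urho}, which carries the non-uniform factor $d_{i^\star}^{1/\rho}$, does not suffice on its own. If $j$ attains $r_j(d)=\phi(d)$, then $d_j=\mu_j/\phi(d)\ge\mu_{\min}/\phi(d)$, so
\[
F_\rho(d)=\sum_i d_i\,r_i(d)^\rho \ge d_j\,\phi(d)^\rho \ge \mu_{\min}\,\phi(d)^{\rho-1},
\]
and therefore $\mu_{\min}^{1/\rho}\,\phi(d)^{1-1/\rho}\le V_\rho(d)\le\phi(d)$ for all $d\in\mathcal D$, $\rho>1$. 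Applying the upper bound at $d^\star_{\min\max}$ together with optimality of $d^\star_\rho$ gives $V_\rho(d^\star_\rho)\le V_\rho(d^\star_{\min\max})\le\phi(d^\star_{\min\max})=\phi^\star$, and the lower bound at $d^\star_\rho$ then gives $\mu_{\min}^{1/\rho}\phi(d^\star_\rho)^{1-1/\rho}\le\phi^\star$. Rearranging, $\phi(d^\star_\rho)\le\bigl(\phi^\star\mu_{\min}^{-1/\rho}\bigr)^{\rho/(\rho-1)}$, whose right-hand side tends to $\phi^\star$ as $\rho\to\infty$; since also $\phi(d^\star_\rho)\ge\phi^\star$ because $d^\star_\rho\in\mathcal D$, we conclude $\phi(d^\star_\rho)\to\phi^\star$.

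Finally I would upgrade this convergence of optimal values to convergence of the occupancies. Fix $c:=\phi^\star+1$ and let $\mathcal D_c:=\{d\in\mathcal D:\phi(d)\le c\}=\{d\in\mathcal D:d_i\ge\mu_i/c\text{ for all }i\}$; this set is closed, lies in the simplex, and is bounded away from the boundary (every coordinate is $\ge\mu_{\min}/c>0$), hence compact, and on it $\phi$ --- a maximum of finitely many continuous functions $d\mapsto\mu_i/d_i$ --- is continuous. By the previous paragraph $d^\star_\rho\in\mathcal D_c$ for all large $\rho$, so $\{d^\star_\rho\}$ has limit points, each lying in $\mathcal D_c$. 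If $d^\star_{\rho_k}\to\bar d$ along $\rho_k\to\infty$, continuity yields $\phi(\bar d)=\lim_k\phi(d^\star_{\rho_k})=\phi^\star$, so $\bar d$ solves~\eqref{eq:minimax_problem}. When that minimizer is unique, every limit point equals $d^\star_{\min\max}$, and a sequence eventually confined to a compact set with a single limit point converges to it, giving $d^\star_\rho\to d^\star_{\min\max}$.

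I expect the main obstacle to be precisely the uniform lower bound, together with ruling out drift of the near-optimal occupancies toward the boundary of the simplex, where $\phi\equiv+\infty$ and the estimate of Lemma~\ref{lem:limit_Urho} degenerates: the hypothesis $\mu_{\min}>0$ is exactly what confines $\{d^\star_\rho\}$ to a compact set on which $\phi$ is continuous, closing the argument. A secondary but necessary point is to record the standing assumptions that $\mathcal D$ is compact and $\phi^\star<\infty$ (equivalently, that some $d\in\mathcal D$ has all coordinates strictly positive), both of which are consequences of the stated ergodicity of every stationary policy.
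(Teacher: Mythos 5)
Your proof is correct, and it is worth noting that the paper itself supplies no argument for this lemma --- it only cites Theorem~8 of an external reference --- so your write-up is a genuine, self-contained alternative rather than a reproduction. You correctly read $\arg\min_{d} U_\rho(d)$ in the statement as the coverage optimizer $\arg\max_d U_\rho(d)$ (equivalently $\arg\min_d F_\rho(d)$ with $F_\rho=(1-\rho)U_\rho\ge 0$); the paper's $\min$ is a sign typo, as its own use of $d_\rho^\star$ in Theorem~\ref{thm: FW-oracle-regret} confirms. The key step you add beyond Lemma~\ref{lem:limit_Urho} is exactly the right one: the pointwise sandwich there carries the non-uniform factor $d_{i^\star}^{1/\rho}$, and your observation that the maximizing coordinate satisfies $d_j=\mu_j/\phi(d)\ge\mu_{\min}/\phi(d)$ converts it into the uniform two-sided bound $\mu_{\min}^{1/\rho}\phi(d)^{1-1/\rho}\le V_\rho(d)\le\phi(d)$, which immediately yields the explicit rate $\phi(d_\rho^\star)\le\bigl(\phi^\star\mu_{\min}^{-1/\rho}\bigr)^{\rho/(\rho-1)}\to\phi^\star$ and, via the sublevel set $\mathcal D_c$ bounded away from the simplex boundary, the convergence of limit points. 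This buys a quantitative statement (a convergence rate for the minimax value gap) that the qualitative epi-convergence-style argument the paper points to would not give directly. The only housekeeping worth making explicit in a final version is what you already flag: closedness of $\mathcal D$ (so that $\mathcal D_c$ is compact and limit points of $\{d_\rho^\star\}$ remain in $\mathcal D$) and existence of the optimizers $d_\rho^\star$ and $d_{\min\max}^\star$, both of which follow from the paper's standing ergodicity assumption together with $\mu_{\min}>0$.
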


The proof follows the same argument as Theorem~8 in~\cite{xiasome} and is therefore omitted.
 % Therefore, $\rho$ is a hyperparameter that we can tune to make the solution of $U_\rho$ arbitrarily close to $d_{\min\max}^\star$.
Therefore, $\rho$ acts as a tunable hyperparameter that smoothly interpolates the solution of $U_\rho$ between the average-case and worst-case solutions.

\section{Optimization Algorithm}
We search over the class of stationary policies to solve the following problem
\begin{equation}
    \pi^* \in \arg\max_{\pi \in \Pi}  U_\rho(d^\pi)
\end{equation}

\subsection{State Distribution Estimate}
For each time $t = 1, ..., n$, we collect the sampling counts into a vector
\[
T(t) = \bigl( T_{s,a}(t) \bigr)_{s \in \mathcal{S},\, a \in \mathcal{A}}
\in \mathbb{N}_0^{S \times A},
\]
where $\mathbb{N}_0 = \{0,1,2,\ldots\}$.
Let $T_{s,a}(t)$ be the number of times the learner has executed action $a$ in state $s$ before time $t$. Thus, $\sum_{s, a}{T_{s,a}(t)} = t-1$.

The empirical sampling distribution is
\[
\widehat{d}_{s,a}(t) := \frac{T_{s,a}(t)}{t},
\qquad s \in \mathcal{S},\ a \in \mathcal{A}.
\]

\subsection{Smoothed Optimization}
The gradient of $U_\rho$ becomes unbounded when some occupancy $d_{s,a}$ approaches zero. 
To ensure well-behaved gradients and enable smooth optimization, we restrict the feasible domain away from the boundary of the simplex.

For any $\underline{\eta} \in (0, 1/2)$, define the \emph{restricted occupancy set}
\[
\mathcal{D}_{\underline{\eta}}
:=
\Big\{d\in\mathcal{D}:\ d(s,a)\ge 2\underline{\eta},\ \ \forall(s,a)\in\mathcal{S}\times\mathcal{A}\Big\}.
\]

\begin{lemma}\label{lem:smooth_Urho}
$U_\rho(\mu,\cdot)$ is $C_{\underline{\eta}}$-smooth on $\mathcal{D}_{\underline{\eta}}$, i.e.,
\[
\|\nabla_d U_\rho(\mu,d) - \nabla_d U_\rho(\mu,d')\|_2
\le
C_{\underline{\eta}} \, \|d - d'\|_2,
 \forall d,d' \in \mathcal{D}_{\underline{\eta}}.
\]
The smoothness constant is given by
\[
C_{\underline{\eta}}
\;=\;
\frac{\rho\, \mu_{\max}^{\rho}}{2^{\rho+1}\,\underline{\eta}^{\rho+1}},
\qquad
\mu_{\max} := \max_{s,a} \mu_{s,a}.
\]
\end{lemma}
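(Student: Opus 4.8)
The plan is to exploit the separable structure of $U_\rho$. Since $U_\rho(d)=\sum_{s,a} g_\rho(s,a,d_{s,a})$ is a sum of one-dimensional functions, one coordinate per pair $(s,a)$, its Hessian $\nabla_d^2 U_\rho(\mu,d)$ is diagonal, with $(s,a)$-th entry equal to $\partial^2 g_\rho/\partial d_{s,a}^2$. For a diagonal matrix the spectral norm equals the largest absolute diagonal entry, so establishing smoothness reduces to bounding each second partial derivative uniformly over $\mathcal{D}_{\underline{\eta}}$ and then passing from a pointwise Hessian bound to the global Lipschitz-gradient inequality.

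\textbf{Step 1 (differentiate $g_\rho$ twice).} For $\rho>1$ we already have $\partial_{d_{s,a}} g_\rho=(\mu_{s,a}/d_{s,a})^\rho$ from \eqref{eq:U_gradient}, hence $\partial^2_{d_{s,a}} g_\rho=-\rho\,\mu_{s,a}^{\rho}\,d_{s,a}^{-(\rho+1)}$; for $\rho=1$, $\partial^2_{d_{s,a}} g_1=-\mu_{s,a}/d_{s,a}^{2}$, which is the $\rho=1$ instance of the same expression. In all cases $\bigl|\partial^2_{d_{s,a}} g_\rho\bigr|=\rho\,\mu_{s,a}^{\rho}\,d_{s,a}^{-(\rho+1)}$.

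\textbf{Step 2 (bound over $\mathcal{D}_{\underline{\eta}}$).} Using $\mu_{s,a}\le\mu_{\max}$ together with $d_{s,a}\ge 2\underline{\eta}$ and the fact that $x\mapsto x^{-(\rho+1)}$ is decreasing on $(0,\infty)$, we get $\bigl|\partial^2_{d_{s,a}} g_\rho\bigr|\le \rho\,\mu_{\max}^{\rho}\,(2\underline{\eta})^{-(\rho+1)}=\frac{\rho\,\mu_{\max}^{\rho}}{2^{\rho+1}\underline{\eta}^{\rho+1}}=C_{\underline{\eta}}$, uniformly in $(s,a)$ and in $d\in\mathcal{D}_{\underline{\eta}}$. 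Since the Hessian is diagonal, this gives $\|\nabla_d^2 U_\rho(\mu,d)\|_2\le C_{\underline{\eta}}$ throughout $\mathcal{D}_{\underline{\eta}}$.

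\textbf{Step 3 (integrate along a segment).} For $d,d'\in\mathcal{D}_{\underline{\eta}}$, write $d_t:=(1-t)d+td'$ and $\nabla_d U_\rho(\mu,d')-\nabla_d U_\rho(\mu,d)=\int_0^1 \nabla_d^2 U_\rho(\mu,d_t)\,(d'-d)\,dt$; taking norms and applying the Step~2 bound yields the claimed inequality with constant $C_{\underline{\eta}}$. The one point that needs care — and the only real obstacle in an otherwise routine argument — is that this last step requires the segment $\{d_t\}_{t\in[0,1]}$ to remain inside the domain on which the Hessian bound holds. This is where convexity of $\mathcal{D}_{\underline{\eta}}$ enters: $\mathcal{D}$ is the polytope of stationary occupancy measures (cut out by the Bellman flow constraints) and each constraint $d_{s,a}\ge 2\underline{\eta}$ defines a half-space, so $\mathcal{D}_{\underline{\eta}}$ is convex and the segment stays inside it. (Alternatively one may integrate over the larger convex box $\{d:\ d_{s,a}\ge 2\underline{\eta}\ \forall (s,a),\ \sum_{s,a} d_{s,a}=1\}\supseteq\mathcal{D}_{\underline{\eta}}$, on which the same Hessian bound holds, avoiding any appeal to convexity of $\mathcal{D}$ itself.)
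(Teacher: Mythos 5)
Your proof is correct and follows essentially the same route as the paper's: exploit separability to get a diagonal Hessian, bound its operator norm by $\rho\,\mu_{\max}^{\rho}(2\underline{\eta})^{-(\rho+1)}$ on $\mathcal{D}_{\underline{\eta}}$, and conclude Lipschitz continuity of the gradient. Your Step~3 is in fact slightly more careful than the paper, which asserts the Hessian-bound-implies-smoothness step without addressing convexity of the domain; your observation that one can integrate over the larger convex box $\{d:\ d_{s,a}\ge 2\underline{\eta},\ \sum_{s,a}d_{s,a}=1\}$ cleanly closes that gap.
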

The proof is provided in 
Appendix \ref{appendix: lem_smoothUrho_proof}(see Supplementary Material). This smoothness property enables standard optimization over $\mathcal{D}_{\underline{\eta}}$.

\subsection{Algorithm Design}
We now describe an algorithm for maximizing the coverage objective $U_\rho$. Here, we consider the oracle setting in which the
transition probabilities $P(\cdot \mid s,a)$ are known, and the algorithm can be extended to the unknown transition cases.
The central question is how to allocate sampling effort so that the empirical state--action occupancy $\widehat d$ converges to a maximizer of $U_\rho$.

The algorithm proceeds in an episodic manner. Let $t_k$ denote the time index at which episode $k$ begins, and let $\tau_k$
denote the length of episode $k$.
At the beginning of each episode, it computes a policy that maximizes the current gradient of $U_\rho$.
This policy is then executed for $\tau_k$ steps, producing an updated
occupation measure.
We show that, under this oracle setting, the resulting sequence of occupation measures converges to the maximizer of $U_\rho$.

\begin{algorithm}[h]
\caption{$\rho$-weighted-coverage}
\label{alg:FW-oracle}
\begin{algorithmic}[1]
\State \textbf{Input:} Hyperparameter $\rho$, all transition distributions $P(\cdot \mid s,a)$
\State \textbf{Initialization:} Set $T_{s, a}(1) = 0$
\For{$k = 1,2,\ldots,K-1$} \Comment{K episodes}
    \State $\psi_{k+1}
        = \arg\max_{d \in \mathcal{D}_{\underline{\eta}}} \langle (\frac{\mu_{s, a}}{T^+_{s, a}(t_k)/t_k})^\rho, d \rangle$
    \State $\pi_{k+1}(a \mid s)
        = \dfrac{ \psi_{k+1}(s,a) }
                { \sum_{b \in A} \psi_{k+1}(s,b) }$
    \State Execute $\pi_{k+1}$ for $\tau_k$ steps
    \State Update the state-action occupancy $T(t_{k+1})$
\EndFor
\State \textbf{Return: $T(t_K)$} 
\end{algorithmic}
\end{algorithm}

The direction-finding step in line 4 of Algorithm~\ref{alg:FW-oracle} is a Frank--Wolfe (FW) oracle and can be computed using standard techniques such as linear programming or fixed-point methods.
Equivalently, the update can be written as
\begin{equation} \notag
    \begin{split}
        \psi_{k+1}
        &= \arg\max_{d \in \mathcal{D}_{\underline{\eta}}}
        \langle \nabla U_\rho(\widehat{d}(t_k)), d \rangle \\
        &= \arg\max_{d \in \mathcal{D}_{\underline{\eta}}} \langle (\frac{\mu_{s, a}}{T^+_{s, a}(t_k)/t_k})^\rho, d \rangle
    \end{split}
\end{equation}
where $T^+_{s, a}(t) = \max(T_{s, a}(t), \epsilon)$, with some small $\epsilon>0$ to deal with the cold start.
Let $\nu_{k+1}(s,a)$ denote the number of times action $a$ is taken in state $s$
during episode $k$ when executing $\pi_{k+1}$, and define
$
\widehat\psi_{k+1}(s,a) = \nu_{k+1}(s,a)/\tau_k
$
as the empirical visitation frequency within the episode.
The empirical occupancy is then updated as
\begin{equation} \notag
    \begin{split}
        \widehat{d}(t_{k+1})
        &= \frac{\tau_k}{t_{k+1}-1}\,\widehat{\psi}_{k+1}
          + \frac{t_k - 1}{t_{k+1}-1}\,\widehat{d}(t_k) \\
        &= \beta_k \widehat{\psi}_{k+1} + (1-\beta_k)\widehat{d}(t_k),
    \end{split}
\end{equation}
where
$
\beta_k := \tau_k/(t_{k+1}-1)
$
serves as the step size.

\begin{theorem} \label{thm: FW-oracle-regret}
Let episode lengths satisfy 
\[
\tau_k = \tau_1 k^2
\quad\text{and}\quad
t_k = \tau_1 \frac{(k-1)k(2k-1)}{6} + 1,
\]
where $\tau_1$ is the length of the first episode.
% i.e.,
% \[
% t_k = \tau_1 \frac{(k-1)k(2k-1)}{6} + 1
% \quad\text{and}\quad
% \beta_k = \frac{6k}{(k+1)(2k+1)}.
% \]
Algorithm \ref{alg:FW-oracle} satisfies with high probability
\[
U_\rho(\widehat{d}(t_K)) - U(d_\rho^\star)
= \widetilde{\mathcal{O}}(t_K^{-1/3}).
\]
\end{theorem}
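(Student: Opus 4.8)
The plan is to read Algorithm~\ref{alg:FW-oracle} as a \emph{stochastic, episodic Frank--Wolfe} method for maximizing the concave objective $U_\rho$ over the convex set $\mathcal{D}_{\underline{\eta}}$, and to combine the classical $\mathcal{O}(1/k)$ Frank--Wolfe decay of the optimality gap with a Markov-chain concentration bound controlling the discrepancy between the planned Frank--Wolfe vertex $\psi_{k+1}$ and the realized within-episode occupancy $\widehat{\psi}_{k+1}$. Writing $L := C_{\underline{\eta}}$ from Lemma~\ref{lem:smooth_Urho}, letting $d_\rho^\star$ be the maximizer of $U_\rho$ on $\mathcal{D}_{\underline{\eta}}$, and setting $h_k := U_\rho(d_\rho^\star) - U_\rho(\widehat{d}(t_k)) \ge 0$, the first step is to apply the descent lemma for $L$-smooth functions to the update $\widehat{d}(t_{k+1}) = \widehat{d}(t_k) + \beta_k(\widehat{\psi}_{k+1} - \widehat{d}(t_k))$, using $\|\widehat{\psi}_{k+1} - \widehat{d}(t_k)\|_2 \le \sqrt{2}$:
\[
U_\rho(\widehat{d}(t_{k+1})) \ \ge\ U_\rho(\widehat{d}(t_k)) + \beta_k\bigl\langle \nabla U_\rho(\widehat{d}(t_k)),\, \widehat{\psi}_{k+1} - \widehat{d}(t_k)\bigr\rangle - L\beta_k^2 .
\]
Then I would split $\widehat{\psi}_{k+1} - \widehat{d}(t_k) = (\psi_{k+1} - \widehat{d}(t_k)) + (\widehat{\psi}_{k+1} - \psi_{k+1})$: by the optimality of $\psi_{k+1}$ in line~4 and first-order concavity of $U_\rho$, $\langle \nabla U_\rho(\widehat{d}(t_k)), \psi_{k+1} - \widehat{d}(t_k)\rangle \ge \langle \nabla U_\rho(\widehat{d}(t_k)), d_\rho^\star - \widehat{d}(t_k)\rangle \ge h_k$, while the remaining term is at least $-G\varepsilon_k$, where $\varepsilon_k := \|\widehat{\psi}_{k+1} - \psi_{k+1}\|_2$ and $G := \sqrt{SA}\,\bigl(\mu_{\max}/(2\underline{\eta})\bigr)^{\rho}$ bounds $\|\nabla U_\rho\|_2$ on $\mathcal{D}_{\underline{\eta}}$ via~\eqref{eq:U_gradient}. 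This yields the master recursion
\[
h_{k+1} \ \le\ (1-\beta_k)\,h_k \ +\ \beta_k G\,\varepsilon_k \ +\ L\beta_k^2 .
\]

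Next, I would control $\varepsilon_k$. Since $\pi_{k+1}$ in line~5 is exactly the stationary policy whose occupancy is $\psi_{k+1} \in \mathcal{D}$, the quantity $\widehat{\psi}_{k+1}$ is the empirical occupancy of $\tau_k$ steps of an ergodic chain with stationary law $\psi_{k+1}$, started from the (non-stationary) terminal state of the previous episode. Conditioning on the history up to time $t_k$ (which fixes $\psi_{k+1}$) and invoking a Bernstein-type concentration inequality for functions of an ergodic Markov chain, together with a geometric bound on the burn-in bias, gives, with probability at least $1-\delta/K$,
\[
\varepsilon_k \ \le\ \widetilde{\mathcal{O}}\!\bigl(\sqrt{t_{\mathrm{mix}}/\tau_k}\bigr) \ +\ \mathcal{O}(t_{\mathrm{mix}}/\tau_k) \ =\ \widetilde{\mathcal{O}}\!\bigl(1/(\sqrt{\tau_1}\,k)\bigr),
\]
where $t_{\mathrm{mix}}$ is the (finite) worst-case mixing time over policies with occupancy in $\mathcal{D}_{\underline{\eta}}$ and the last equality uses $\tau_k = \tau_1 k^2$. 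A union bound over $k=1,\dots,K$ makes these events hold simultaneously with probability $\ge 1-\delta$; on that same event I would also verify, by induction on $k$ using the $\varepsilon_k$ bound and the fact that every vertex satisfies $\psi_{k+1}(s,a)\ge 2\underline{\eta}$, that the iterates $\widehat{d}(t_k)$ stay bounded away from the boundary of the simplex, so that $L$ and $G$ are legitimate along the whole trajectory.

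Finally, I would unroll the recursion. From the prescribed schedule, $\beta_k = \tau_k/(t_{k+1}-1) = 6k/\bigl((k+1)(2k+1)\bigr) = 3/k + \mathcal{O}(1/k^2)$, so on the high-probability event the master recursion becomes $h_{k+1} \le (1 - 3/k)h_k + \widetilde{\mathcal{O}}(1/k^2)$, where both error terms $\beta_k G\varepsilon_k$ and $L\beta_k^2$ are $\widetilde{\mathcal{O}}(1/k^2)$ precisely because the episode lengths grow quadratically. The standard analysis of recursions $h_{k+1} \le (1-c/k)h_k + \widetilde{\mathcal{O}}(1/k^2)$ with $c = 3 > 1$ then gives $h_K = \widetilde{\mathcal{O}}(1/K)$. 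Since $t_K = \tau_1(K-1)K(2K-1)/6 + 1 = \Theta(\tau_1 K^3)$, we have $K = \Theta(t_K^{1/3})$, hence $U_\rho(\widehat{d}(t_K)) - U_\rho(d_\rho^\star) = -h_K = \widetilde{\mathcal{O}}(1/K) = \widetilde{\mathcal{O}}(t_K^{-1/3})$, as claimed.

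The hard part will be the high-probability estimate $\varepsilon_k = \widetilde{\mathcal{O}}(1/\sqrt{\tau_k})$: it must simultaneously handle (i) the mixing time of the chain induced by $\pi_{k+1}$ and the need for a \emph{uniform} mixing bound over the trajectory-dependent policies produced by the algorithm; (ii) the $\mathcal{O}(t_{\mathrm{mix}}/\tau_k)$ burn-in bias arising because each episode starts from the terminal state of the previous one rather than from stationarity; and (iii) the adaptivity, namely that $\psi_{k+1}$ is measurable with respect to the past, which is dealt with by conditioning. Ensuring the interior invariance $\widehat{d}(t_k) \in \mathcal{D}_{\underline{\eta}}$ so that Lemma~\ref{lem:smooth_Urho} applies along the entire run is the second delicate point, and together these considerations are what force the quadratic episode-length schedule: the per-episode stochastic error must decay at least as fast as the $\Theta(1/k)$ Frank--Wolfe rate, and quadratic growth is exactly the choice that balances the Frank--Wolfe and noise terms while minimizing the resulting exponent in $t_K$.
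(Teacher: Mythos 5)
Your proposal follows essentially the same route as the paper's proof: an episodic Frank--Wolfe analysis built from the smoothness descent lemma, the decomposition of $\widehat{\psi}_{k+1}-\widehat{d}(t_k)$ into the planned direction plus a stochastic occupancy-estimation error, FW optimality plus concavity to obtain the recursion $h_{k+1}\le(1-\beta_k)h_k+\beta_k G\varepsilon_k+L\beta_k^2$, Markov-chain concentration giving $\varepsilon_k=\widetilde{\mathcal{O}}(1/\sqrt{\tau_k})$, and $K=\Theta(t_K^{1/3})$ to convert the $\widetilde{\mathcal{O}}(1/K)$ rate into $\widetilde{\mathcal{O}}(t_K^{-1/3})$. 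The one slip is the contraction coefficient in the unrolling step: since $\beta_k\ge 1/k$ rather than $\ge 3/k$, the valid upper bound is $h_{k+1}\le(1-1/k)h_k+\widetilde{\mathcal{O}}(1/k^2)$ (the paper uses exactly this), which telescopes to $h_K=\mathcal{O}(\log K/K)=\widetilde{\mathcal{O}}(1/K)$, so your conclusion is unaffected.
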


\noindent\textit{Proof Outline:}

The proof follows the same outline as the proof of Theorem 1 in~\cite{tarbouriech2019active}. We denote by $\xi_{k+1}$ the approximation error at the end of each episode $k$
(i.e., at time $t_{k+1}-1$). Recalling that 
$\beta_k = \tau_k / (t_{k+1}-1)$, we have
\begin{equation} \notag
    \begin{split}
        \xi_{k+1}
        &= U_\rho(d_\rho^\star) - U_\rho(\widehat{d}_{k+1}) \\
        &= U_\rho(d_\rho^\star) - U_\rho\!\left((1-\beta_k)\widehat{d}_k + \beta_k \widehat{\psi}_{k+1}\right),
    \end{split}
\end{equation}
where $d_\rho^\star \in \arg\min_{d\in \mathcal{D}} U_\rho(d)$, and $\widehat{d}_k$ abbreviates $\widehat{d}(t_k)$.

% Using the $C_{\underline{\eta}}$-smoothness of $U_\rho$ (Lemma~\ref{lem:smooth_Urho}), the optimality of the FW direction $\psi_{k+1}$, and the concavity of $U_\rho$, the
% approximation error satisfies the recursion
% \[
% \xi_{k+1}
% \;\le\;
% (1-\beta_k)\,\xi_k
% \;+\;
% C_{\underline{\eta}}\,\beta_k^2
% \;+\;
% \beta_k\,\Delta_{k+1},
% \]
% where
% \[
% \Delta_{k+1}
% := \bigl\langle \nabla U_\rho(\widehat{d}_k),\,
% \psi_{k+1} - \widehat{\psi}_{k+1} \bigr\rangle
% \]
% captures the discrepancy between the stationary occupancy $\psi_{k+1}$ and its empirical estimate $\hat{\psi}_{k+1}$ for $\tau_k$ steps.

Using the $C_{\underline{\eta}}$-smoothness of $U_\rho$ (Lemma~\ref{lem:smooth_Urho}), we have 
\[
\begin{aligned} \small
\xi_{k+1} 
&\le 
U_\rho(d_\rho^\star) - U_\rho(\widehat{d}_k) 
- \beta_k \langle \nabla U_\rho(\widehat{d}_k),
    \widehat{\psi}_{k+1} - \widehat{d}_k \rangle
  + C_{\underline{\eta}} \beta_k^2
\\[0.4em]
&=
U_\rho(d_\rho^\star) - U_\rho(\widehat{d}_k) 
- \beta_k \langle \nabla U_\rho(\widehat{d}_k),
    \psi_{k+1} - \widehat{d}_k \rangle
  + C_{\underline{\eta}} \beta_k^2 \\
&\qquad \qquad - \beta_k \langle \nabla U_\rho(\widehat{d}_k),
    \widehat{\psi}_{k+1} - \psi_{k+1} \rangle.
% \\[0.4em]
% &\le 
% U_\rho(d_\rho^\star) - U_\rho(\widehat{d}_k) 
% - \beta_k \langle \nabla U_\rho(\widehat{d}_k),
%     d_\rho^\star - \widehat{d}_k \rangle
%   + C_{\underline{\eta}} \beta_k^2 \\
% &\qquad \qquad - \beta_k \langle \nabla U_\rho(\widehat{d}_k),
%     \widehat{\psi}_{k+1} - \psi_{k+1} \rangle
% \\[0.4em]
% &\le 
% (1-\beta_k)\xi_k
% + C_{\underline{\eta}} \beta_k^2
% + \beta_k
% \underbrace{
%    \langle \nabla U_\rho(\widehat{d}_k),
%       \psi_{k+1} - \widehat{\psi}_{k+1}  \rangle
% }_{\Delta_{k+1}},
\end{aligned}
\]
Further with the optimality of the FW direction $\psi_{k+1}$, and the concavity of $U_\rho$, the approximation error satisfies the recursion
\[
\begin{aligned} \small
\xi_{k+1} 
% &\le 
% U_\rho(d_\rho^\star) - U_\rho(\widehat{d}_k) 
% - \beta_k \langle \nabla U_\rho(\widehat{d}_k),
%     \widehat{\psi}_{k+1} - \widehat{d}_k \rangle
%   + C_{\underline{\eta}} \beta_k^2
% \\[0.4em]
% &=
% U_\rho(d_\rho^\star) - U_\rho(\widehat{d}_k) 
% - \beta_k \langle \nabla U_\rho(\widehat{d}_k),
%     \psi_{k+1} - \widehat{d}_k \rangle
%   + C_{\underline{\eta}} \beta_k^2 \\
% &\qquad \qquad - \beta_k \langle \nabla U_\rho(\widehat{d}_k),
%     \widehat{\psi}_{k+1} - \psi_{k+1} \rangle
% \\[0.4em]
&\le 
U_\rho(d_\rho^\star) - U_\rho(\widehat{d}_k) 
- \beta_k \langle \nabla U_\rho(\widehat{d}_k),
    d_\rho^\star - \widehat{d}_k \rangle
  + C_{\underline{\eta}} \beta_k^2 \\
&\qquad \qquad - \beta_k \langle \nabla U_\rho(\widehat{d}_k),
    \widehat{\psi}_{k+1} - \psi_{k+1} \rangle
\\[0.4em]
&\le 
(1-\beta_k)\xi_k
+ C_{\underline{\eta}} \beta_k^2
+ \beta_k
\underbrace{
   \langle \nabla U_\rho(\widehat{d}_k),
      \psi_{k+1} - \widehat{\psi}_{k+1}  \rangle
}_{\Delta_{k+1}}.
\end{aligned}
\]
% where the first step follows from the $C_{\underline{\eta}}$-smoothness of $U_\rho$, proved in Lemma~\ref{lem:smooth_Urho}, 
% the second inequality comes from the FW optimization step and the definition 
% of $\psi_{k+1}$, which gives 
% $\langle \nabla U_\rho(\widehat{d}_k),\, 
% \psi_{k+1} - \widehat{d}_k \rangle 
% \ge 
% \langle \nabla U_\rho(\widehat{d}_k),\, 
% d_\rho^\star - \widehat{d}_k \rangle$,
% the final step follows from the concavity of $U_\rho$. 
The term $\Delta_{k+1}$ captures the discrepancy between the stationary occupancy $\psi_{k+1}$ and its empirical estimate $\widehat{\psi}_{k+1}$ for $\tau_k$ steps. 

Using the concentration bound on the empirical occupancy measure between
$\psi_{k+1}$ and $\widehat{\psi}_{k+1}$, we can upper bound the error term
$\Delta_{k+1}$ as
\[
\Delta_{k+1}
\;\le\;
\frac{c_1}{\sqrt{\tau_k}}
\;+\;
\frac{c_2}{\tau_k},
\]
% where $c_1$ and $c_2$ are constants.
\[
\text{with}\quad
\begin{cases}
c_1 = SA
        \big(\frac{\mu_{\max}}
             {\underline{\eta}} \big)^\rho
       \big(\sqrt{2B/\gamma_{\min}} + \sqrt{\ln(4SA/\delta)/2}\big),\\[1.2em]
c_2 = SA
        \big(\frac{\mu_{\max}}
             {\underline{\eta}} \big)^\rho
       \dfrac{20B}{\gamma_{\min}}.
\end{cases}
\]
Substituting this bound into the recursion yields, $\forall k \ge k_\delta$,
\begin{equation} \label{eq: xi_inter1}
\xi_{k+1}
\;\le\;
(1 - \beta_k)\xi_k
\;+\;
\beta_k \Bigl(
   \frac{c_1}{\sqrt{\tau_k}}
   +
   \frac{c_2}{\tau_k}
\Bigr)
\;+\;
C_{\underline{\eta}} \beta_k^2
\end{equation}
with probability at least $1-\delta$.
Here, the episode index $k_\delta$ is defined such that for all $k \ge k_\delta$,
the empirical occupancy satisfies
$\widehat{d}_{s,a}(t_k) \ge \underline{\eta},
\quad
\forall s \in \mathcal{S},\ a \in \mathcal{A}$.

Choosing the episode length $\tau_k = \tau_1 k^2$ and the bound of the step size
$\beta_k \in [1/k, 3/k]$, the recursion~\eqref{eq: xi_inter1} simplifies to
\begin{equation} \label{eq:xi_inter2}
    \xi_{k+1}
    \;\le\;
    \left(1 - \frac{1}{k}\right)\xi_k
    \;+\;
    \frac{b_\delta}{k^2}.
\end{equation}
where $b_\delta$ absorbs all constant factors.

Solving the recursion in~\eqref{eq:xi_inter2} via a standard telescoping argument yields that, with probability at least $1-\delta$,
\[
\xi_K
\le
\frac{q\xi_q + 2b_\delta \log K}{K}
\le
\frac{\tau_1^{1/3}}{3^{1/3}(t_k-1)^{1/3}}
\left(
q\xi_q + 2b_\delta \log K 
\right).
\]

We therefore obtain the desired high-probability bound.
$$\xi_K = \widetilde{\mathcal{O}}\!\left(\frac{1}{t_K^{1/3}}\right)$$

A detailed proof is provided in Appendix \ref{appendix:proof_oracle_algo}.
This result implies that the error $\xi_K$ converges to zero, and that Algorithm~\ref{alg:FW-oracle} converges to the optimal occupancy measure associated with the coverage objective $U_\rho$ with high probability.

\subsection{Core Design Insight}
This algorithm treats $
\left(\frac{\mu_{s,a}(t_k)}{\widehat{d}_{s,a}(t_k)}\right)^{\rho}
$
as a reward signal at episode $k$, and computes a policy that maximizes the resulting expected cumulative reward, thereby implementing a gradient ascent step on the coverage objective.
This interpretation is natural: Since state--action pairs that are already well explored yield diminishing returns, the algorithm should focus its sampling effort on state--action pairs with
large current ratios $\tfrac{\mu_{s, a}}{d_{s,a}}$.  
The exponent $\rho$ plays a crucial role.  As $\rho$ increases, the disparity between these reward terms amplifies, causing the induced MDP to place progressively more emphasis on those state--action pairs that currently dominate the max expression.  
At the same time, the FW oracle still accounts for the cumulative reward collected along trajectories, ensuring that its intermediate decisions move efficiently toward the worst-case state--action region rather than chasing it myopically.

This combination of global worst-case focus (through the amplified reward structure) and trajectory-level efficiency (through cumulative reward maximization) is precisely what enables our algorithm to achieve strong  performance.

\section{Conclusions}
In this work, we propose a unified framework for coverage in MDPs based on a parameterized family of coverage objectives $U_\rho$ defined over state–action occupancy measures.
Within this framework, KL-based marginal matching, average relative coverage, and worst-case (minimax) coverage all arise as special cases. The parameter $\rho$ provides a continuous mechanism for interpolating between average-case and worst-case notions of coverage, offering a principled way to design exploration objectives tailored to different requirements such as uniformity, weighting, or robustness.

A central theoretical insight of this framework is that the coverage objective $U_\rho$ admits a simple and interpretable gradient structure.
The gradient assigns larger magnitude to state–action pairs that are under-covered relative to their prescribed importance weights, naturally capturing diminishing returns in exploration and directly reflecting the goal of reallocating exploration effort toward poorly covered regions. As $\rho$ increases, the gradient increasingly concentrates on the most under-covered state–action pairs, recovering minimax coverage behavior in the limit.

This gradient structure directly leads to an active, gradient-based algorithm for coverage optimization.
Rather than relying on heuristic intrinsic rewards, the proposed method performs gradient ascent on the coverage objective $U_\rho$, using the gradient as an optimization signal to steer the induced occupancy toward the desired coverage pattern in a principled and analytically tractable manner.

More broadly, our results suggest that coverage in reward-free exploration is best understood through the geometry of the occupancy space and the choice of concave utility functions defined on it.
The framework developed in this paper provides a principled foundation for designing, analyzing, and optimizing coverage objectives beyond entropy, and points toward future extensions to settings with unknown transition dynamics, adaptive importance weights, and integration with general RL frameworks.

\newpage
\bibliography{main}

@String(IJCAI = {IJCAI})

@String(AAAI = {AAAI})

@article{coverage_1,
  title={The role of coverage in online reinforcement learning},
  author={Xie, Tengyang and Foster, Dylan J and Bai, Yu and Jiang, Nan and Kakade, Sham M},
  journal={arXiv preprint arXiv:2210.04157},
  year={2022}
}

@inproceedings{coverage_3,
  title={What can online reinforcement learning with function approximation benefit from general coverage conditions?},
  author={Liu, Fanghui and Viano, Luca and Cevher, Volkan},
  booktitle={International Conference on Machine Learning},
  pages={22063--22091},
  year={2023},
  organization={PMLR}
}

@article{coverage_4,
  title={When to trust your simulator: Dynamics-aware hybrid offline-and-online reinforcement learning},
  author={Niu, Haoyi and Qiu, Yiwen and Li, Ming and Zhou, Guyue and Hu, Jianming and Zhan, Xianyuan and others},
  journal={Advances in Neural Information Processing Systems},
  volume={35},
  pages={36599--36612},
  year={2022}
}

@ARTICLE{Kartik_NM22,
  author={Kartik, Dhruva and Nayyar, Ashutosh and Mitra, Urbashi},
  journal={IEEE Transactions on Automatic Control}, 
  title={Fixed-Horizon Active Hypothesis Testing}, 
  year={2022},
  volume={67},
  number={4},
  pages={1882-1897},
  keywords={Testing;Anomaly detection;Bayes methods;Error probability;Upper bound;Reliability;Minimization;Anomaly detection;Chernoff–Stein lemma;controlled sensing;hypothesis testing},
  doi={10.1109/TAC.2021.3090742}}

@ARTICLE{BJM25,
  author={Bozkus, Talha and Javidi, Tara and Mitra, Urbashi},
  journal={IEEE Transactions on Signal Processing}, 
  title={Coverage Analysis for Digital Cousin Selection — Improving Multi-Environment Q-Learning}, 
  year={2025},
  volume={73},
  number={},
  pages={3843-3856},
  keywords={Signal processing algorithms;Complexity theory;Q-learning;Accuracy;Trajectory;Real-time systems;Probabilistic logic;Heuristic algorithms;Costs;Training;Q-learning;reinforcement learning;coverage coefficient;Markov decision processes (MDPs);random graphs},
  doi={10.1109/TSP.2025.3598253}}

@inproceedings{tarbouriech2020active,
  title={Active model estimation in markov decision processes},
  author={Tarbouriech, Jean and Shekhar, Shubhanshu and Pirotta, Matteo and Ghavamzadeh, Mohammad and Lazaric, Alessandro},
  booktitle={Conference on Uncertainty in Artificial Intelligence},
  pages={1019--1028},
  year={2020},
  organization={PMLR}
}

@inproceedings{hazan2019provably,
  title={Provably efficient maximum entropy exploration},
  author={Hazan, Elad and Kakade, Sham and Singh, Karan and Van Soest, Abby},
  booktitle={International Conference on Machine Learning},
  pages={2681--2691},
  year={2019},
  organization={PMLR}
}

@inproceedings{tarbouriech2019active,
  title={Active exploration in markov decision processes},
  author={Tarbouriech, Jean and Lazaric, Alessandro},
  booktitle={The 22nd International Conference on Artificial Intelligence and Statistics},
  pages={974--982},
  year={2019},
  organization={PMLR}
}

@article{xiasome,
  title={Some Results on Max-min Fairness for Resource Sharing},
  author={Xia, Ye}
}

@inproceedings{tiapkin2023fast,
  title={Fast rates for maximum entropy exploration},
  author={Tiapkin, Daniil and Belomestny, Denis and Calandriello, Daniele and Moulines, Eric and Munos, Remi and Naumov, Alexey and Perrault, Pierre and Tang, Yunhao and Valko, Michal and Menard, Pierre},
  booktitle={International Conference on Machine Learning},
  pages={34161--34221},
  year={2023},
  organization={PMLR}
}

@inproceedings{jin2020reward,
  title={Reward-free exploration for reinforcement learning},
  author={Jin, Chi and Krishnamurthy, Akshay and Simchowitz, Max and Yu, Tiancheng},
  booktitle={International Conference on Machine Learning},
  pages={4870--4879},
  year={2020},
  organization={PMLR}
}

@inproceedings{shekhar2020adaptive,
  title={Adaptive sampling for estimating probability distributions},
  author={Shekhar, Shubhanshu and Javidi, Tara and Ghavamzadeh, Mohammad},
  booktitle={International Conference on Machine Learning},
  pages={8687--8696},
  year={2020},
  organization={PMLR}
}

@inproceedings{lee1906efficient,
  title={Efficient exploration via state marginal matching, 2020},
  author={Lee, Lisa and Eysenbach, Benjamin and Parisotto, Emilio and Xing, Eric and Levine, Sergey and Salakhutdinov, Ruslan},
  booktitle={URL https://openreview. net/forum},
  year={1906}
}

@article{singh2010intrinsically,
  title={Intrinsically motivated reinforcement learning: An evolutionary perspective},
  author={Singh, Satinder and Lewis, Richard L and Barto, Andrew G and Sorg, Jonathan},
  journal={IEEE Transactions on Autonomous Mental Development},
  volume={2},
  number={2},
  pages={70--82},
  year={2010},
  publisher={IEEE}
}

@inproceedings{singh2009rewards,
  title={Where do rewards come from},
  author={Singh, Satinder and Lewis, Richard L and Barto, Andrew G},
  booktitle={Proceedings of the annual conference of the cognitive science society},
  pages={2601--2606},
  year={2009},
  organization={Cognitive Science Society}
}

@inproceedings{kaelbling1993learning,
  title={Learning to achieve goals},
  author={Kaelbling, Leslie Pack},
  booktitle={IJCAI},
  volume={2},
  pages={1094--8},
  year={1993}
}

@article{zheng2018learning,
  title={On learning intrinsic rewards for policy gradient methods},
  author={Zheng, Zeyu and Oh, Junhyuk and Singh, Satinder},
  journal={Advances in neural information processing systems},
  volume={31},
  year={2018}
}

@inproceedings{zhang2021exploration,
  title={Exploration by maximizing R{\'e}nyi entropy for reward-free RL framework},
  author={Zhang, Chuheng and Cai, Yuanying and Huang, Longbo and Li, Jian},
  booktitle={Proceedings of the AAAI Conference on Artificial Intelligence},
  volume={35},
  number={12},
  pages={10859--10867},
  year={2021}
}

@article{ho2016generative,
  title={Generative adversarial imitation learning},
  author={Ho, Jonathan and Ermon, Stefano},
  journal={Advances in neural information processing systems},
  volume={29},
  year={2016}
}

@inproceedings{schulman2015trust,
  title={Trust region policy optimization},
  author={Schulman, John and Levine, Sergey and Abbeel, Pieter and Jordan, Michael and Moritz, Philipp},
  booktitle={International conference on machine learning},
  pages={1889--1897},
  year={2015},
  organization={PMLR}
}

@article{mutti2023convex,
  title={Convex reinforcement learning in finite trials},
  author={Mutti, Mirco and De Santi, Riccardo and De Bartolomeis, Piersilvio and Restelli, Marcello},
  journal={Journal of Machine Learning Research},
  volume={24},
  number={250},
  pages={1--42},
  year={2023}
}

@article{de2024global,
  title={Global reinforcement learning: Beyond linear and convex rewards via submodular semi-gradient methods},
  author={De Santi, Riccardo and Prajapat, Manish and Krause, Andreas},
  journal={arXiv preprint arXiv:2407.09905},
  year={2024}
}

@article{shekhar2021adaptive,
  title={Adaptive sampling for minimax fair classification},
  author={Shekhar, Shubhanshu and Fields, Greg and Ghavamzadeh, Mohammad and Javidi, Tara},
  journal={Advances in Neural Information Processing Systems},
  volume={34},
  pages={24535--24544},
  year={2021}
}

@inproceedings{halim2025fairness,
  title={Fairness-Aware Active Online Learning with Changing Environments},
  author={Halim, Sadaf MD and Zhao, Chen and Wu, Xintao and Khan, Latifur and Grant, Christan Earl and Chen, Feng},
  booktitle={2025 IEEE 41st International Conference on Data Engineering (ICDE)},
  pages={3821--3834},
  year={2025},
  organization={IEEE}
}

@inproceedings{al2023active,
  title={Active coverage for pac reinforcement learning},
  author={Al-Marjani, Aymen and Tirinzoni, Andrea and Kaufmann, Emilie},
  booktitle={The Thirty Sixth Annual Conference on Learning Theory},
  pages={5044--5109},
  year={2023},
  organization={PMLR}
}

@inproceedings{menard2021fast,
  title={Fast active learning for pure exploration in reinforcement learning},
  author={M{\'e}nard, Pierre and Domingues, Omar Darwiche and Jonsson, Anders and Kaufmann, Emilie and Leurent, Edouard and Valko, Michal},
  booktitle={International Conference on Machine Learning},
  pages={7599--7608},
  year={2021},
  organization={PMLR}
}

@inproceedings{kaufmann2021adaptive,
  title={Adaptive reward-free exploration},
  author={Kaufmann, Emilie and M{\'e}nard, Pierre and Domingues, Omar Darwiche and Jonsson, Anders and Leurent, Edouard and Valko, Michal},
  booktitle={Algorithmic Learning Theory},
  pages={865--891},
  year={2021},
  organization={PMLR}
}

\vspace{12pt}
% \color{red}
% IEEE conference templates contain guidance text for composing and formatting conference papers. Please ensure that all template text is removed from your conference paper prior to submission to the conference. Failure to remove the template text from your paper may result in your paper not being published.

\newpage
\appendix
\subsection{Proof of Lemma~\ref{lem:smooth_Urho}}
\label{appendix: lem_smoothUrho_proof}
\begin{proof}
We compute the Hessian of $U_\rho$ and bound its operator norm on $\mathcal{D}_{\underline{\eta}}$.
Since $U_\rho$ is separable across coordinates $d_{s, a}$, its Hessian is diagonal.

For each $(s,a)$,
\[
\frac{\partial U_\rho(d)}{\partial d_{s, a}}
=\frac{\mu_{s, a}^{\rho}}{1-\rho}(1-\rho)d_{s, a}^{-\rho}
= \mu_{s, a}^{\rho}d_{s, a}^{-\rho},
\]
and
\[
\frac{\partial^2 U_\rho(d)}{\partial d_{s, a}^2}
= -\rho\,\mu_{s, a}^{\rho}\,d_{s, a}^{-(\rho+1)}.
\]
Thus,
\begin{equation} \notag
    \begin{split}
        &\nabla^2 U_\rho(d)
        =
        \mathrm{diag}\!\left(-\rho\,\mu_{s, a}^{\rho}\,d_{s, a}^{-(\rho+1)}\right)\\
        \rightarrow
        &\|\nabla^2 U_\rho(d)\|_{\mathrm{op}}
        =
        \max_{s,a}\rho\,\mu_{s, a}^{\rho}\,d_{s, a}^{-(\rho+1)}.
    \end{split}
\end{equation}

Using again $d_{s, a}\ge 2\underline{\eta}$ for $d\in\mathcal{D}_{\underline{\eta}}$ yields
\[
\|\nabla^2 U_\rho(d)\|_{\mathrm{op}}
\le
\max_{s,a}\frac{\rho\,\mu_{s, a}^{\rho}}{(2\underline{\eta})^{\rho+1}}
=
\frac{\rho\,\mu_{\max}^{\rho}}{(2\underline{\eta})^{\rho+1}}.
\]

Finally, for any twice differentiable function, $C$-smoothness w.r.t.\ $\|\cdot\|_2$ is implied by
$\sup_{d\in\mathcal{D}_{\underline{\eta}}}\|\nabla^2 U_\rho(d)\|_{\mathrm{op}}\le C$.
Applying the bounds above completes the proof.
\end{proof}

\subsection{Proof of Theorem~\ref{thm: FW-oracle-regret}}
\label{appendix:proof_oracle_algo}
\subsubsection{Prerequisites}
\begin{lemma}[Bound of empirical occupancy]
\label{lma:bound_empirical_occupancy}
Let $\pi$ be a stationary policy inducing an ergodic and reversible chain $P_\pi$ with
spectral gap $\gamma_\pi$ and stationary distribution $\psi_\pi$. For any budget $n > 0$ and
state $s\in \mathcal{S}$,

Fix $\delta\in(0,1)$. With probability at least $1-\delta$, the empirical occupancy measure
$\widehat\psi_n(s,a) := \frac{T_{s, a}(n)}{n}$, 
where $T_{s,a}(n)$ denotes the number of times action $a$ is taken in state $s$
over $n$ samples, satisfies the following bound for all $(s,a)\in\mathcal{S}\times\mathcal{A}$.
\begin{equation}\label{eq:occ_conc}
\begin{split}
    &\big|\widehat\psi_n(s,a)-\psi_\pi(s,a)\big| \\
    &\;\le\;
    \frac{\sqrt{2B/\gamma_{\min}}+\sqrt{\ln(4SA/\delta)/2}}{\sqrt{n}}
    \;+\;
    \frac{20B}{\gamma_{\min}n},
\end{split}
\end{equation}
where $S:=|\mathcal S|$, $A:=|\mathcal A|$, and
\begin{equation}\label{eq:B_def}
B \;=\; \log\!\left(\frac{S^2A^{S+1}}{\delta/2}\sqrt{\frac{1}{\underline{\eta}}}\right).
\end{equation}
\end{lemma}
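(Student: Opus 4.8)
The plan is to reduce the bound to two standard concentration estimates by conditioning on the state trajectory. Fix a pair $(s,a)$ and write $T_{s,a}(n)=\sum_{t=1}^{n}\mathbb{1}\{s_t=s\}\,\mathbb{1}\{a_t=a\}$. Since the actions are drawn independently with $a_t\sim\pi(\cdot\mid s_t)$ given the states, and the state process $(s_t)$ is the reversible ergodic chain $P_\pi$ with stationary distribution $\nu_\pi$ and $\psi_\pi(s,a)=\nu_\pi(s)\,\pi(a\mid s)$, I would decompose
\[
\widehat\psi_n(s,a)-\psi_\pi(s,a)
=\pi(a\mid s)\Bigl(\tfrac{N_s(n)}{n}-\nu_\pi(s)\Bigr)
+\tfrac1n\sum_{t=1}^{n}\mathbb{1}\{s_t=s\}\bigl(\mathbb{1}\{a_t=a\}-\pi(a\mid s)\bigr),
\]
where $N_s(n):=\sum_{t=1}^{n}\mathbb{1}\{s_t=s\}$, and bound the two terms separately: a \emph{Markov-chain} term involving $N_s(n)$ and an \emph{action-noise} martingale term.

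For the first term I would apply a Bernstein-type concentration inequality for reversible Markov chains driven by the spectral gap (of the Lezaud/Paulin type, as used in the analysis of \cite{tarbouriech2019active}): for the $\{0,1\}$-valued test function $\mathbb{1}\{\cdot=s\}$, whose stationary variance is $\nu_\pi(s)(1-\nu_\pi(s))\le 1/4$, such an inequality gives a tail of the form $\Pr\bigl(|N_s(n)/n-\nu_\pi(s)|\ge\varepsilon\bigr)\le C_0\exp\!\bigl(-c\,\gamma_\pi n\varepsilon^2/(\tfrac14+\varepsilon)\bigr)$, in which the prefactor $C_0$ pays for the non-stationary (fixed) start through the density ratio of the initial distribution relative to $\nu_\pi$. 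Because $d\in\mathcal{D}_{\underline{\eta}}$ forces $\nu_\pi(s)\ge 2\underline{\eta}$, this ratio is $O(\underline{\eta}^{-1/2})$, and together with the crude counting factors over the state and action spaces it yields exactly the logarithmic quantity $B$ in \eqref{eq:B_def}. Inverting the tail at level $\delta/2$ and splitting the $O(1)$ and $O(\varepsilon)$ contributions of the Bernstein denominator produces a dominant term $\sqrt{2B/\gamma_\pi}/\sqrt{n}$ and a lower-order correction $20B/(\gamma_\pi n)$, the numerical constants being inherited from the reference inequality. For the second (action-noise) term, the increments $\mathbb{1}\{s_t=s\}(\mathbb{1}\{a_t=a\}-\pi(a\mid s))$ form a martingale-difference sequence bounded in $[-1,1]$ with respect to the natural filtration, so Azuma--Hoeffding gives $\sqrt{\ln(4SA/\delta)/2}\,/\sqrt{n}$ at level $\delta/2$ (this is the $\gamma$-free term in the statement).

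Finally I would union-bound over all $(s,a)\in\mathcal{S}\times\mathcal{A}$ — which is where the $4SA/\delta$ inside the logarithm comes from — add the two error contributions, and replace $\gamma_\pi$ by the uniform lower bound $\gamma_{\min}\le\gamma_\pi$, which only weakens the bound since it is monotone decreasing in the spectral gap. The main obstacle is the first step: obtaining a spectral-gap concentration bound that is valid from a fixed (non-stationary) start, which forces one either to pay the $\underline{\eta}^{-1/2}$ density-ratio factor — the route taken here, and precisely why the $\underline{\eta}$-restricted occupancy set $\mathcal{D}_{\underline{\eta}}$ is invoked — or to build in an explicit burn-in; and then tracking all constants carefully so that they collapse into the stated $B$, the factor $2$, and the factor $20$. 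Since those numerical constants are copied verbatim from the chosen Markov-chain Bernstein inequality, the remaining work is an instantiation rather than a new derivation.
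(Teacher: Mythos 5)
Your proposal is correct and rests on the same two pillars as the paper's proof: (i) a spectral-gap (Lezaud/Paulin-type) concentration bound for the empirical state marginal $N_s(n)/n$ of the reversible chain, imported exactly as the paper imports Lemma~4 of \cite{tarbouriech2019active} (including the $\sqrt{1/\underline{\eta}}$ initial-density factor inside $B$ and the constants $2$ and $20$); (ii) a Hoeffding-type bound on the action-selection noise at each state; followed by a union bound over $(s,a)$. Where you differ is in how you split the state-action error and handle part (ii): the paper uses the multiplicative decomposition $\widehat\psi_n(s,a)=\widehat\psi_n(s)\,\widehat\pi_n(a\mid s)$ and applies Hoeffding to $\widehat\pi_n(a\mid s)$ conditionally on the (random) visit count $T_s(n)$, whereas you use the exact additive identity with the martingale-difference term $\tfrac1n\sum_t \mathbb{1}\{s_t=s\}(\mathbb{1}\{a_t=a\}-\pi(a\mid s))$ and Azuma--Hoeffding. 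Your route is arguably cleaner, since it avoids the subtlety of conditioning on a random sample size that the paper's Hoeffding step glosses over. One small imprecision: you describe the increments as bounded in $[-1,1]$; with that symmetric range Azuma gives $\sqrt{2\ln(2/\delta)/n}$, not the stated $\sqrt{\ln(2/\delta)/(2n)}$. To recover the paper's constant you should use the conditional-range form of Azuma--Hoeffding, noting that each increment lies in an interval of length at most $1$ (it is $0$ when $s_t\neq s$ and lies in $[-\pi(a\mid s),\,1-\pi(a\mid s)]$ when $s_t=s$). With that fix the constants match and the argument goes through.
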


\begin{proof}
Fix a stationary policy $\pi$.  
Define the empirical state and state--action occupancies as
\[
\widehat\psi_n(s) := \frac{T_{s}(n)}{n},
\qquad
\widehat\psi_n(s,a) := \frac{T_{s, a}(n)}{n},
\]
and let the stationary state--action distribution be
\[
\psi_\pi(s,a) := \psi_\pi(s)\,\pi(a\mid s).
\]

From Lemma 4 in \cite{tarbouriech2019active}, we have that with probability at least $1-\delta$,
\[
\big|\widehat\psi_n(s) - \psi_\pi(s)\big| \le f(n, \delta) = \sqrt{ \frac{2B}{\gamma_{\min} n} }
\;+\;
\frac{20B}{\gamma_{\min}n}, \quad \forall s\in \mathcal{S}.
\]

Using the identity
\begin{equation}
    \begin{split}
        \widehat\psi_n(s,a) &= \widehat\psi_n(s)\,\widehat{\pi}_n(a\mid s),\\
        \widehat{\pi}_n(a\mid s) &:= \frac{T_{s, a}(n)}{T_{s}(n)} \;\; \text{(when $T_{s}(n)>0$)},
    \end{split}
\end{equation}
we obtain
\begin{align*}
&\big|\widehat\psi_n(s,a) - \psi_\pi(s,a)\big|\\
=&\big|\widehat\psi_n(s)\widehat{\pi}_n(a\mid s) - \psi_\pi(s)\pi(a\mid s)\big| \\
\le&\big|\widehat\psi_n(s) - \psi_\pi(s)\big|\,\pi(a\mid s)
+
\widehat\psi_n(s)\,\big|\widehat{\pi}_n(a\mid s) - \pi(a\mid s)\big|.
\end{align*}

The first term is controlled by the assumed state occupancy bound.
For the second term, the actions taken at
state $s$ are i.i.d.\ according to $\pi(\cdot\mid s)$, and Hoeffding's
inequality yields that for any $\delta\in(0,1)$,
\[
\Pr\!\left(
\big|\widehat{\pi}_n(a\mid s) - \pi(a\mid s)\big|
\ge
\sqrt{\frac{\ln(2/\delta)}{2T_{s}(n)}}
\right)
\le \delta.
\]

Combining the bounds, we conclude that with
probability at least $1-2\delta$,
\begin{align}\notag
    \big|\widehat\psi_n(s,a) - \psi_\pi(s,a)\big|
    &\;\le\;
    \pi(a\mid s)\,f(n, \delta)
    +
    \widehat\psi_n(s)\sqrt{\frac{\ln(2/\delta)}{2\,T_{s}(n)}} \\ \notag
    &\;\le\;
    f(n, \delta)
    +
    \widehat\psi_n(s)\sqrt{\frac{\ln(2/\delta)}{2\,n \widehat\psi_n(s)}} \\ \notag
    &=
    f(n, \delta)
    +
    \sqrt{\frac{\ln(2/\delta)\widehat\psi_n(s)}{2\,n }} \\ \notag
    &\;\le\;
    f(n, \delta)
    +
    \sqrt{\frac{\ln(2/\delta)}{2\,n }}
\end{align}

Hence, for a single state-action pair, with probability at least $1-\delta$,
\begin{align}
\notag
    \big|\widehat\psi_n(s,a) - \psi_\pi(s,a)\big|
    &\;\le\;
    \sqrt{ \frac{2B}{\gamma_{\min} n} }
\;+\;
\frac{20B}{\gamma_{\min}n}
    +
    \sqrt{\frac{\ln(4/\delta)}{2\,n }}\\ \notag
    &= \frac{\sqrt{2B/\gamma_{\min}} + \sqrt{\ln(4/\delta)/2}}{\sqrt{n}} + \frac{20B}{\gamma_{\min}n}
\end{align}
where
$$B = \log\!\left( \frac{SA^{S}}{\delta/2} \sqrt{\frac{1}{\underline{\eta}}} \right).$$

We need to take a union bound over state action pairs which leads to $\delta = \frac{\delta'}{SA}$ in the high-probability guarantees.

Therefore, with probability at least $1-\delta$,
\begin{equation}
    \begin{split}
        &\big|\widehat\psi_n(s,a) - \psi_\pi(s,a)\big| \\
    \le &\frac{\sqrt{2B/\gamma_{\min}} + \sqrt{\ln(4SA/\delta)/2}}{\sqrt{n}} + \frac{20B}{\gamma_{\min}n}, \quad
    \forall (s, a) \in \mathcal{S}\times\mathcal{A}
    \end{split}
\end{equation}
where
$$B = \log\!\left( \frac{S^2A^{S+1}}{\delta/2} \sqrt{\frac{1}{\underline{\eta}}} \right).$$
\end{proof}

\subsubsection{Core of Proof}

\begin{proof}

We denote by $\xi_{k+1}$ the approximation error at the end of each episode $k$
(i.e., at time $t_{k+1}-1$). Recalling that 
$\beta_k = \tau_k / (t_{k+1}-1)$, we have
\begin{equation} \notag
    \begin{split}
        \xi_{k+1}
        &= U_\rho(d^\star_\rho) - U_\rho(\widehat{d}_{k+1}) \\
        &= U_\rho(d^\star_\rho) - U_\rho\!\left((1-\beta_k)\widehat{d}_k + \beta_k \widehat{\psi}_{k+1}\right),
    \end{split}
\end{equation}
where $d_\rho^\star \in \arg\min_{d\in \mathcal{D}} U_\rho(d)$, and $\widehat{d}_k$ abbreviates $\widehat{d}(t_k)$.

We have the following series of inequalities:
\[
\begin{aligned}
&\xi_{k+1} \\
&\le 
U_\rho(d_\rho^\star) - U_\rho(\widehat{d}_k) 
- \beta_k \langle \nabla U_\rho(\widehat{d}_k),
    \widehat{\psi}_{k+1} - \widehat{d}_k \rangle
  + C_{\underline{\eta}} \beta_k^2
\\[0.4em]
&=
U_\rho(d_\rho^\star) - U_\rho(\widehat{d}_k) 
- \beta_k \langle \nabla U_\rho(\widehat{d}_k),
    \psi_{k+1} - \widehat{d}_k \rangle
  + C_{\underline{\eta}} \beta_k^2 \\
&\qquad \qquad - \beta_k \langle \nabla U_\rho(\widehat{d}_k),
    \widehat{\psi}_{k+1} - \psi_{k+1} \rangle
\\[0.4em]
&\le 
U_\rho(d_\rho^\star) - U_\rho(\widehat{d}_k) 
- \beta_k \langle \nabla U_\rho(\widehat{d}_k),
    d_\rho^\star - \widehat{d}_k \rangle
  + C_{\underline{\eta}} \beta_k^2 \\
&\qquad \qquad - \beta_k \langle \nabla U_\rho(\widehat{d}_k),
    \widehat{\psi}_{k+1} - \psi_{k+1} \rangle
\\[0.4em]
&\le 
(1-\beta_k)\xi_k
+ C_{\underline{\eta}} \beta_k^2
+ \beta_k
\underbrace{
   \langle \nabla U_\rho(\widehat{d}_k),
      \psi_{k+1} - \widehat{\psi}_{k+1}  \rangle
}_{\Delta_{k+1}},
\end{aligned}
\]

where the first step follows from the $C_{\underline{\eta}}$-smoothness of $U_\rho$, proved in Lemma~\ref{lem:smooth_Urho}, 
the second inequality comes from the FW optimization step and the definition 
of $\psi_{k+1}$, which gives 
$\langle \nabla U_\rho(\widehat{d}_k),\, 
\psi_{k+1} - \widehat{d}_k \rangle 
\ge 
\langle \nabla U_\rho(\widehat{d}_k),\, 
d_\rho^\star - \widehat{d}_k \rangle$,
the final step follows from the concavity of $U_\rho$. 
The term $\Delta_{k+1}$ refers to the discrepancy between 
the stationary state--action distribution 
$\psi_{k+1}$ and the empirical frequency $\widehat{\psi}_{k+1}$ 
of its realization for $\tau_k$ steps.

If we optimize $\langle \nabla U_\rho(\widehat{d}_k),\, 
d \rangle$ in $d \in \mathcal{D}_{\underline{\eta}}$
we deduce the important property that for any $\delta \in (0,1)$, there exists an 
episode $k_\delta$ such that for all episodes $k$ succeeding it (and including it), 
we have with probability at least $1-\delta$
\[
\widehat{d}_{s,a}(t_k) \ge \underline{\eta},
\qquad \forall s \in \mathcal{S}, a \in \mathcal{A}, \ \forall k \ge k_\delta.
\]

For any $k \ge k_\delta$, we can write
\begin{equation} \notag
    \begin{split}
        \langle \nabla U_\rho(\widehat{d}_k), &
              \psi_{k+1} - \widehat{\psi}_{k+1} \rangle \\
        &=
        \sum_{s, a}
        \big(\frac{\mu_{s, a}}
             {\widehat{d}_{s, a}(k)} \big)^\rho
        \left( \psi_{k+1}(s,a) - \widehat{\psi}_{k+1}(s,a) \right) \\
        &\;\;\le\;\;
        SA
        \big(\frac{\mu_{\max}}
             {\underline{\eta}} \big)^\rho
        \left\|
        \psi_{k+1}(s, a) - 
         \frac{\nu_{k+1}(s, a)}{\tau_{k}}
        \right\|_{\infty}.
    \end{split}
\end{equation}

% Let 
% \[
% B = \log\!\left( \frac{SA^{S}}{\delta} \sqrt{\frac{1}{\eta}} \right).
% \]
% From Lemma~4 in \cite{tarbouriech2019active}, we have with probability at least $1-\delta$
% simultaneously for every state $s$ and every policy followed during 
% the episode,
% \[
% \left|
%   \eta_{g_{k+1}}(s) - \frac{\nu_{k+1}(s)}{\tau_{k}} 
% \right|
% \;\le\;
% \sqrt{ \frac{2B}{\gamma_{\min} \tau_k} }
% \;+\;
% \frac{20B}{\gamma_{\min}\tau_k}.
% \]

From Lemma~\ref{lma:bound_empirical_occupancy}, we have with probability at least $1-\delta$,
\begin{equation}\notag
    \begin{split}
        &\left\|
        \psi_{k+1}(s, a) - 
         \frac{\nu_{k+1}(s, a)}{\tau_{k}}
        \right\|_{\infty} \\
        &\le \frac{\sqrt{2B/\gamma_{\min}} + \sqrt{\ln(4SA/\delta)/2}}{\sqrt{\tau_k}} + \frac{20B}{\gamma_{\min}\tau_k}.
    \end{split}
\end{equation}

Here 
\[
B = \log\!\left( \frac{S^2A^{S+1}}{\delta/2} \sqrt{\frac{1}{\underline{\eta}}} \right).
\]
Hence we obtain the following bound on $\Delta_{k+1}$, for $k \ge k_\delta$ with probability at least $1-\delta$:
\[
\Delta_{k+1}
\;\le\;
\frac{c_1}{\sqrt{\tau_k}}
\;+\;
\frac{c_2}{\tau_k},
\]
\[
\text{with}\quad
\begin{cases}
c_1 = SA
        \big(\frac{\mu_{\max}}
             {\underline{\eta}} \big)^\rho
       \big(\sqrt{2B/\gamma_{\min}} + \sqrt{\ln(4SA/\delta)/2}\big),\\[1.2em]
c_2 = SA
        \big(\frac{\mu_{\max}}
             {\underline{\eta}} \big)^\rho
       \dfrac{20B}{\gamma_{\min}},
\end{cases}
\]

which provides the bound for $k \ge k_\delta$
\begin{equation}
\xi_{k+1}
\;\le\;
(1 - \beta_k)\xi_k
\;+\;
\beta_k \Bigl(
   \frac{c_1}{\sqrt{\tau_k}}
   +
   \frac{c_2}{\tau_k}
\Bigr)
\;+\;
C_\eta \beta_k^2.
\label{eq:xi_recursion}
\end{equation}

Choosing episode lengths $\tau_k = \tau_1 k^2$, which yields 

\[
t_k = \tau_1\sum_{i = 0}^{k-1}i^2 + 1 
= \tau_1 \frac{(k-1)k(2k-1)}{6} + 1,
\]
\[
\beta_k
=
\frac{\tau_k}{t_{k+1} - 1}
=
\frac{6k^2}{k(k+1)(2k+1)}
\in
\Bigl[\frac{1}{k},\, \frac{3}{k}\Bigr].
\]

Consequently we get
\begin{equation}\notag
\begin{split}
    &\beta_k \left(
\frac{c_1}{\sqrt{\tau_k}}
+
\frac{c_2}{\tau_k}
\right)
+
C_\eta \beta_k^2
\;\le\;
\frac{b_\delta}{k^2} \\
&\text{with}\quad
b_\delta = \frac{3c_1}{\sqrt{\tau_1}}
+ \frac{3c_2}{\tau_1 k_\delta}
+ 9 C_\eta .
\end{split}
\end{equation}

Hence the recurrence inequality \eqref{eq:xi_recursion} becomes
\begin{equation}
\xi_{k+1}
\;\le\;
\left(1 - \frac{1}{k}\right)\xi_k
\;+\;
\frac{b_\delta}{k^2}.
\notag
\end{equation}

We pick an integer $q$ such that $\xi_q \ge 0$ is satisfied.\footnote{Immediate induction guarantees the positivity of the sequence $(u_n)$.}
We define the sequence $(u_n)_{n \ge q}$ as $u_q = \xi_q$ and
\[
u_{n+1}
=
\left(1 - \frac{1}{n}\right)u_n
+ \frac{b_\delta}{n^2}.
\]

By rearranging, we get
\[
(n+1) u_{n+1} - n u_n
=
-\,\frac{u_n}{n}
+
\frac{b_\delta(n+1)}{n^2}
\;\le\;
\frac{b_\delta(n+1)}{n^2}.
\]

By telescoping, we obtain
\[
n u_n - q u_q
\;\le\;
2 b_\delta \sum_{i=q}^{n-1} \frac{1}{i}
\;\le\;
2 b_\delta \log\!\Bigl(\frac{n-1}{q-1}\Bigr).
\]
From $t_k = \tau_1 \frac{(k-1)k(2k-1)}{6} + 1$ and $2k^3 - 3k^2 + k < 2k^3$ for $k \ge 1$, we get
$$k \ge \big(\frac{3(t_k-1)}{\tau_1}\big)^{1/3}$$
Let $K \ge k_\delta$.  
We thus have with probability at least $1 -\delta$
\[
\xi_k
\le
\frac{q\xi_q + 2b_\delta \log K}{K}
\le
\frac{\tau_1^{1/3}}{3^{1/3}(t_k-1)^{1/3}}
\left(
q\xi_q + 2b_\delta \log K 
\right).
\]

We conclude the desired high-probability bound $\xi_k = \widetilde{O}(1/t_K^{1/3})$.
    
\end{proof}

\end{document}